\documentclass{article}

\usepackage{microtype}
\usepackage{graphicx}
\usepackage{subcaption}
\usepackage{xcolor}
\usepackage{enumitem}
\usepackage{booktabs} %
\usepackage{tablefootnote}

\usepackage{hyperref}

\definecolor{candypink}{rgb}{0.89, 0.44, 0.48}          %
\definecolor{mediumaquamarine}{rgb}{0.4, 0.8, 0.67}     %
\definecolor{azure}{rgb}{0.0, 0.5, 1.0}                 %
\definecolor{awesome}{rgb}{1.0, 0.13, 0.32}             %

\newcommand{\csdlink}{\href{https://seohong.me/projects/csd/}{our project page}\xspace}
\newcommand{\csdvideo}{\href{https://seohong.me/projects/csd/}{videos}\xspace}
\newcommand{\csdaddress}{{\url{https://seohong.me/projects/csd/}}}

\usepackage[accepted]{icml2023}

\usepackage{amsmath}
\usepackage{amssymb}
\usepackage{mathtools}
\usepackage{amsthm}

\usepackage[capitalize,noabbrev]{cleveref}

\theoremstyle{plain}
\newtheorem{theorem}{Theorem}[section]

\newtheorem{lemma}[theorem]{Lemma}

\theoremstyle{definition}

\theoremstyle{remark}

\usepackage[textsize=tiny]{todonotes}

\usepackage{color}
\usepackage{amsmath}
\usepackage{amssymb}
\usepackage{multirow, varwidth}
\usepackage{dblfloatfix}
\usepackage{verbatim}
\usepackage{xspace}
\makeatletter
\newif\if@restonecol
\makeatother
\usepackage{xr}
\DeclareRobustCommand\onedot{\futurelet\@let@token\@onedot}
\def\onedot{.\xspace}
\def\eg{\emph{e.g}\onedot} 
\def\ie{\emph{i.e}\onedot}

\newcommand{\cutsectionup}{\vspace{-5pt}}
\newcommand{\cutsectiondown}{\vspace{-3pt}}
\newcommand{\cutsubsectionup}{\vspace{-3pt}}
\newcommand{\cutsubsectiondown}{\vspace{-3pt}}

\setlength{\parskip}{5pt}

\usepackage{amsmath,amsfonts,bm}

\def\eqref#1{equation~\ref{#1}}

\def\1{\bm{1}}

\DeclareMathAlphabet{\mathsfit}{\encodingdefault}{\sfdefault}{m}{sl}
\SetMathAlphabet{\mathsfit}{bold}{\encodingdefault}{\sfdefault}{bx}{n}

\def\gA{{\mathcal{A}}}

\def\gJ{{\mathcal{J}}}

\def\gM{{\mathcal{M}}}
\def\gN{{\mathcal{N}}}

\def\gP{{\mathcal{P}}}

\def\gS{{\mathcal{S}}}

\def\gZ{{\mathcal{Z}}}

\def\sR{{\mathbb{R}}}

\def\sV{{\mathbb{V}}}

\newcommand{\E}{\mathbb{E}}

\hypersetup{urlcolor=magenta}

\icmltitlerunning{Controllability-Aware Unsupervised Skill Discovery}

\begin{document}

\twocolumn[
\icmltitle{Controllability-Aware Unsupervised Skill Discovery}

\icmlsetsymbol{equal}{*}

\begin{icmlauthorlist}
\icmlauthor{Seohong Park}{berkeley}
\icmlauthor{Kimin Lee}{google}
\icmlauthor{Youngwoon Lee}{berkeley}
\icmlauthor{Pieter Abbeel}{berkeley}
\end{icmlauthorlist}

\icmlaffiliation{berkeley}{University of California, Berkeley}
\icmlaffiliation{google}{Google Research}

\icmlcorrespondingauthor{Seohong Park}{seohong@berkeley.edu}

\icmlkeywords{Machine Learning, ICML}

\vskip 0.3in
]

\printAffiliationsAndNotice{}  %

\begin{abstract}
One of the key capabilities of intelligent agents is the ability to discover useful skills without external supervision.
However, the current unsupervised skill discovery methods are often limited to
acquiring simple, easy-to-learn skills
due to the lack of incentives to discover more complex, challenging behaviors.
We introduce a novel unsupervised skill discovery method, \textbf{Controllability-aware Skill Discovery} (\textbf{CSD}),
which actively seeks complex, hard-to-control skills without supervision.
The key component of CSD is a controllability-aware distance function,
which assigns larger values to state transitions that are harder to achieve with the current skills. 
Combined with distance-maximizing skill discovery,
CSD progressively learns more challenging skills over the course of training
as our jointly trained distance function reduces rewards for easy-to-achieve skills.
Our experimental results in six robotic manipulation and locomotion environments demonstrate that
CSD can discover diverse complex skills
including object manipulation and locomotion skills with no supervision,
significantly outperforming prior unsupervised skill discovery methods.
Videos and code are available at \csdaddress
\end{abstract}

\vspace{-17pt}
\section{Introduction}
\cutsectiondown
\label{sec:intro}

Humans are capable of \emph{autonomously} learning skills, ranging from basic muscle control to complex acrobatic behaviors,
which can be later combined to achieve highly complex tasks.
Can machines similarly discover useful skills without any external supervision?
Recently, many unsupervised skill discovery methods have been proposed
to discover diverse behaviors in the absence of extrinsic rewards
\citep{vic_gregor2016,diayn_eysenbach2019,dads_sharma2020,valor_achiam2018,edl_campos2020,visr_hansen2020,ibol_kim2021,aps_liu2021,lsd_park2022,cic_laskin2022}.
These methods have also demonstrated efficient downstream reinforcement learning (RL) either by fine-tuning~\citep{urlb_laskin2021, cic_laskin2022} or sequentially combining~\citep{diayn_eysenbach2019, dads_sharma2020, lsd_park2022} the discovered skills.

\begin{figure}[t!]
    \centering
    \includegraphics[width=\linewidth]{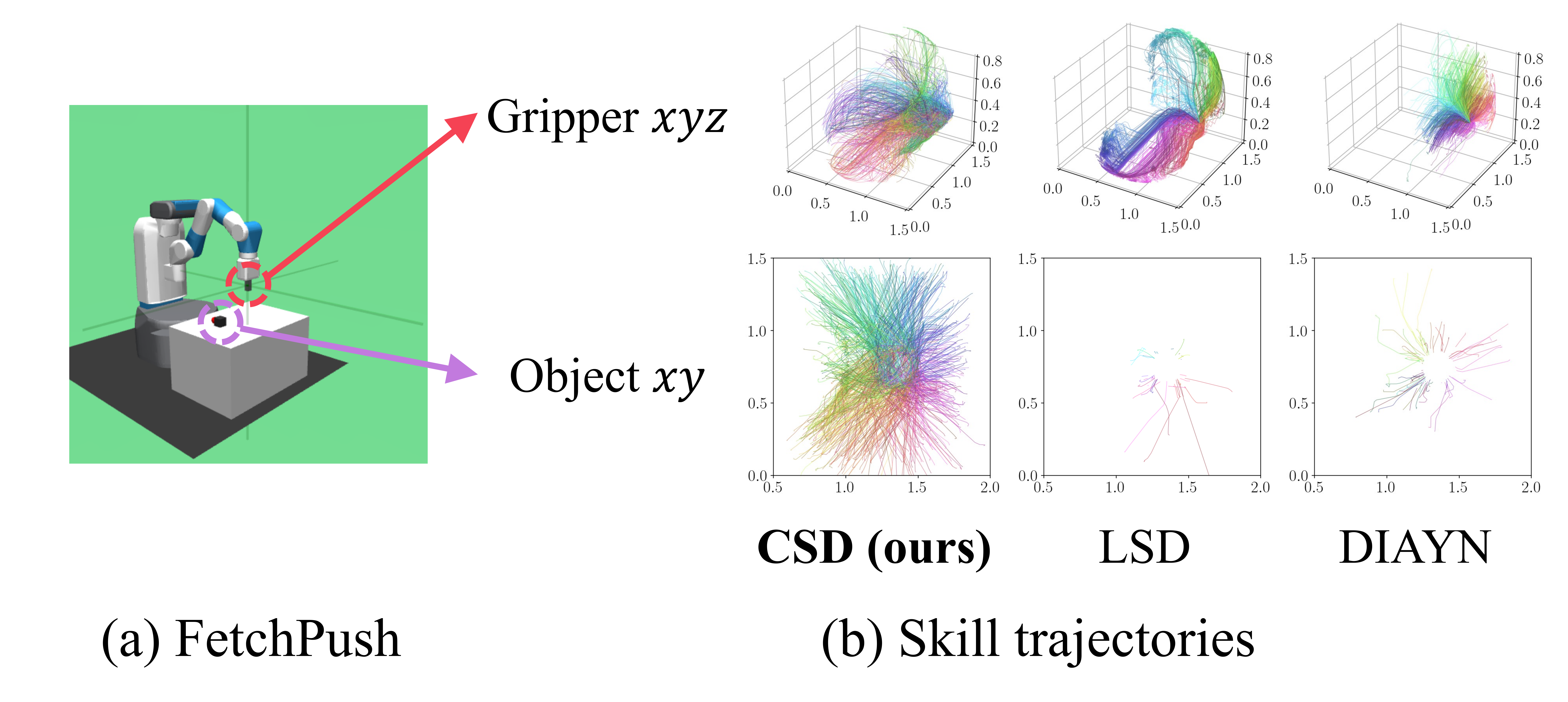}
    \vspace{-25pt}
    \caption{
    Object trajectories and gripper trajectories of 2-D continuous skills discovered
    by three unsupervised skill discovery methods, CSD (ours), LSD~\citep{lsd_park2022}, and DIAYN~\citep{diayn_eysenbach2019}, in the FetchPush environment.
    Trajectories with different colors represent different skills.
    While previous methods focus only on maneuvering the gripper,
    CSD discovers object manipulation skills in the absence of supervision.
    }
    \vspace{-10pt}
    \label{fig:teaser}
\end{figure}

However, in complex environments,
current unsupervised skill discovery methods are often limited to discovering only simple, easy-to-learn skills.
For example, as illustrated in \Cref{fig:teaser},
previous approaches (LSD and DIAYN) only learn to gain control of the agent's own `body' (\ie, the gripper and joint angles),
completely ignoring the object in the Fetch environment.
This is because
learning difficult skills, such as interacting with the object,
has no incentive for them compared to learning easy skills.
In other words, their objectives can be fully optimized with simple skills.

To mitigate this issue, prior approaches incorporate human supervision,
such as limiting the agent's focus to specific dimensions of the state space of interest
\citep{diayn_eysenbach2019,dads_sharma2020,lsd_park2022,irm_adeniji2022}.
However, this not only requires manual feature engineering but also significantly limits the diversity of skills.
On the other hand, we humans consistently challenge ourselves to learn more complex skills
after mastering simple skills in an autonomous manner.

Inspired by this, we propose a novel unsupervised skill discovery method,
\textbf{Controllability-aware Skill Discovery} (\textbf{CSD}),
which explicitly seeks complex, hard-to-learn behaviors that are potentially more useful for solving downstream tasks.
Our key idea is to train a controllability-aware distance function based on the current skill repertoire
and combine it with distance-maximizing skill discovery.
Specifically,
we train the controllability-aware distance function
to assign larger values to harder-to-achieve state transitions
and smaller values to easier-to-achieve transitions with the current skills.
Since CSD aims to maximize this controllability-aware distance,
it autonomously learns increasingly complex skills over the course of training.
We highlight that, to the best of our knowledge,
CSD is the first unsupervised skill discovery method that demonstrates diverse object manipulation skills in the Fetch environment
without any external supervision or manual feature engineering (\eg, limiting the focus only to the object).

To summarize, the main contribution of this work is
to propose CSD, a novel unsupervised skill discovery method built upon the notion of controllability.
We also formulate a general distance-maximizing skill discovery approach to incorporate our controllability-aware distance function with skill discovery.
We empirically demonstrate that
CSD discovers various complex behaviors,
such as object manipulation skills, with no supervision,
outperforming previous state-of-the-art skill discovery methods
in diverse robotic manipulation and locomotion environments.

\cutsectionup
\section{Preliminaries}
\cutsectiondown
\label{sec:preliminaries}

Unsupervised skill discovery aims at finding a potentially useful set of skills
without external rewards.
Formally,
we consider a reward-free Markov decision process (MDP) defined as $\gM = (\gS, \gA, \mu, p)$,
where $\gS$ and $\gA$ are the state and action spaces, respectively,
$\mu : \gP (\gS)$ is the initial state distribution,
and $p : \gS \times \gA \to \gP (\gS)$ is the transition dynamics function.
Each skill is defined as a skill latent vector $z \in \gZ$
and a skill-conditioned policy $\pi(a|s, z)$ that is shared across the skills.
The skill space $\gZ$ can be either discrete skills ($\{1, 2, \dots, D\}$) or continuous skills ($\sR^D)$.

To collect a skill trajectory (behavior),
we sample a skill $z$ from a predefined skill prior distribution $p(z)$ at the beginning of an episode.
We then roll out the skill policy $\pi(a|s, z)$ with the sampled $z$ for the entire episode.
For the skill prior, we use a standard normal distribution for continuous skills and a uniform distribution for discrete skills.

Throughout the paper, $I(\cdot; \cdot)$ denotes the mutual information and $H(\cdot)$ denotes either the Shannon entropy or differential entropy depending on the context.
We use uppercase letters for random variables and lowercase letters for their values
(\eg, $S$ denotes the random variable for states $s$).

\cutsectionup
\section{Related Work}
\cutsectiondown
\label{sec:related_work}

In this section, we mainly discuss closely related prior unsupervised skill discovery work
based on mutual information maximization or Euclidean distance maximization.
A more extensive literature survey on unsupervised skill discovery and unsupervised RL
can be found in \Cref{sec:appx_related}.

\cutsubsectionup
\subsection{Mutual Information-Based Skill Discovery}
\cutsubsectiondown
\label{sec:mi_related_work}

\begin{figure*}[t!]
    \centering
    \includegraphics[width=\textwidth]{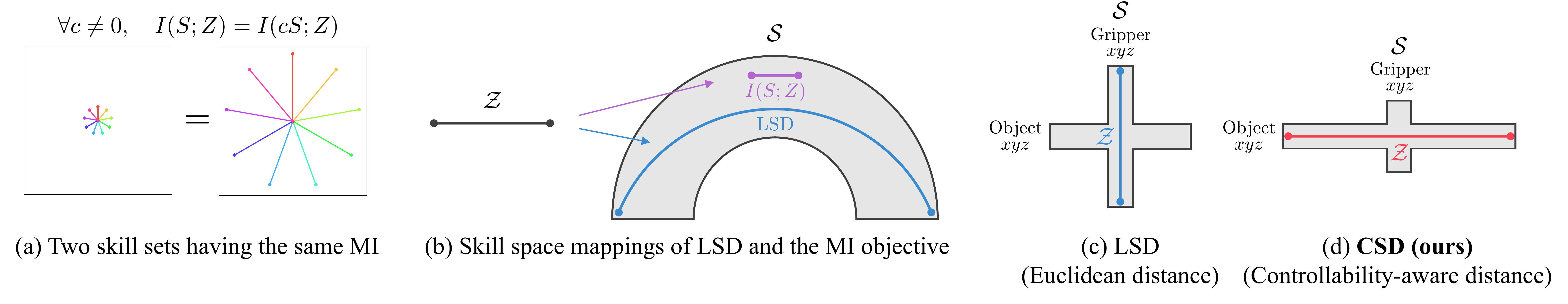}
    \vspace{-2em}
    \caption{
    Illustration of unsupervised skill discovery methods.
    (a) MI is invariant to traveled distances.
    (b) The MI objective simply seeks \emph{any} mapping between $\gZ$ and $\gS$,
    while LSD finds the largest (longest) possible mapping.
    (c) LSD maximizes the \emph{Euclidean} traveled distance, which can lead to simple or trivial behaviors.
    (d) Our CSD maximizes the traveled distance with respect to our learned \emph{controllability-aware} distance function
    that assigns larger values to harder-to-achieve state transitions.
    This leads to more complex skills that can be useful for downstream tasks.
    }
    \vspace{-10pt}
    \label{fig:mi_lsd_csd}
\end{figure*}

Mutual information-based unsupervised skill discovery maximizes
the mutual information (MI) between states $S$ and skills $Z$,
$I(S;Z)$,
which associates different states with different skill latent vectors
so that the behaviors from different $z$s are diverse and distinguishable.
Since computing exact MI is intractable, previous MI-based methods approximate MI in diverse ways,
which can be categorized into reverse-MI and forward-MI \citep{edl_campos2020}.

First, reverse-MI approaches~\citep{vic_gregor2016,diayn_eysenbach2019,valor_achiam2018,visr_hansen2020}
optimize MI in the form of $I(S; Z) = H(Z) - H(Z | S)$,
where $H(Z)$ is a constant as we assume that the skill prior distribution $p(z)$ is fixed.
Thus, maximizing $I(S; Z)$ corresponds to minimizing $H(Z|S)$,
which can be approximated with a variational distribution $q_\theta(z|s)$.
For instance, DIAYN \citep{diayn_eysenbach2019} maximizes the variational lower bound of MI as follows:
\begin{align}
    I(S; Z) &= -H(Z|S) + H(Z) \\
    &= \E_{z, s}[\log p(z|s)] - \E_{z}[\log p(z)] \\
    &\geq \E_{z, s}[\log q_\theta(z|s)] + (\text{const}), \label{eq:diayn}
\end{align}
where $q_\theta(z | s)$ is a variational approximation of $p(z | s)$ \citep{im_barber2003}.
Intuitively, $q_\theta(z|s)$ works as a `skill discriminator'
that tries to infer the original skill $z$ from the state $s$,
encouraging the skill policy to
generate distinguishable skill trajectories for different $z$s (\ie,~diverse skills).
Other reverse-MI methods %
optimize the MI objective similarly but computing MI on entire trajectories \citep{valor_achiam2018} or only on final states \citep{vic_gregor2016} rather than all intermediate states,
or using von Mises-Fisher distributions \citep{visr_hansen2020} for the skill prior distribution instead of Gaussian or uniform distributions.

On the other hand, forward-MI approaches~\citep{dads_sharma2020,edl_campos2020,aps_liu2021,cic_laskin2022}
employ the other decomposition of MI: $I(S; Z) = H(S) - H(S | Z)$.
This decomposition explicitly maximizes the state entropy $H(S)$,
which helps diversify skill trajectories in practice \citep{cic_laskin2022}.
Forward-MI methods minimize the $H(S | Z)$ term with
a variational approximation \citep{dads_sharma2020,aps_liu2021,edl_campos2020}
or a contrastive estimator \citep{cic_laskin2022}.
$H(S)$ can be estimated using
a particle-based entropy estimator \citep{aps_liu2021,cic_laskin2022},
a state marginal matching objective \citep{smm_lee2019,edl_campos2020},
or sampling-based approximation \citep{dads_sharma2020}.

One major limitation of MI-based approaches is that
optimizing the MI objective does not necessarily lead to covering a larger region in the state space.
This is because MI is invariant to traveled distances or any invertible transformation (\Cref{fig:mi_lsd_csd}a),
\ie, $I(S; Z) = I(f(S); Z)$ for any invertible $f$ \citep{mi_kraskov2004}.
Since there is no incentive for the MI objective to further explore the state space,
they often end up discovering `static' skills with limited state coverage~\citep{braxlines_gu2021,lsd_park2022,cic_laskin2022}.

\cutsubsectionup
\subsection{Euclidean Distance-Maximizing Skill Discovery}
\cutsubsectiondown
\label{sec:lsd}

To resolve the limitation of MI-based skill discovery, \citet{lsd_park2022} recently proposed Lipschitz-constrained Skill Discovery (LSD),
which aims to not only establish a mapping between $Z$ and $S$
but also maximize the Euclidean traveled distance in the state space for each skill.
Specifically,
LSD maximizes the state change along the direction specified by the skill $z$
with the following objective:
\begin{align}
    \gJ^{\text{LSD}} &:= \E_{z, s, s'}[(\phi(s') - \phi(s))^\top z] \label{eq:lsd_obj} \\
    & \text{s.t.} \quad \forall x, y \in \gS, \quad \| \phi(x) - \phi(y) \| \leq \| x - y \|, \label{eq:lsd_cst}
\end{align}
where $s'$ denotes the next state and $\phi: \gS \to \sR^D$ denotes a mapping function.
LSD maximizes \Cref{eq:lsd_obj} with respect to both the policy and $\phi$.
Intuitively, this objective aims to align the directions of $z$ and $(\phi(s') - \phi(s))$
while maximizing the length $\|\phi(s') - \phi(s)\|$, which leads to an increase in the state difference $\|s' - s\|$ due to the Lipschitz constraint.
As illustrated in \Cref{fig:mi_lsd_csd}b,
LSD finds the largest possible mapping in the state space
by maximizing Euclidean traveled distances in the state space in diverse directions,
which leads to more `dynamic' skills.
On the other hand, the MI objective finds \emph{any} mapping between the skill space and the state space,
being agnostic to the area of the mapped region,
which often results in `static' skills with limited state coverage.

While promising,
LSD is still limited in that it maximizes \emph{Euclidean} traveled distances in the state space,
which
often does not match the behaviors of our interests
because the Euclidean distance treats all state dimensions equally.
For example, in the Fetch environment in \Cref{fig:teaser},
simply diversifying the position and joint angles of the robot arm is sufficient to achieve large Euclidean traveled distances
because both the coordinates of the object and the gripper lie in the same Euclidean space (\Cref{fig:mi_lsd_csd}c).
As such, LSD and any previous MI-based approaches mostly end up learning skills
that only diversify the agent's own internal states, ignoring the external states (\eg, object pose).

Instead of maximizing the Euclidean distance, we propose to maximize traveled distances with respect to a learned \emph{controllability-aware distance function}
that `stretches' the axes along hard-to-control states (\eg, objects)
and `contracts' the axes along easy-to-control states (\eg, joint angles),
so that maximizing traveled distances results in the discovery of more complex, useful behaviors (\Cref{fig:mi_lsd_csd}d).

\cutsubsectionup
\subsection{Unsupervised Goal-Conditioned RL}
\cutsubsectiondown

Another line of unsupervised RL focuses on
discovering a wide range of \emph{goals} and learning corresponding goal-reaching policies,
which leads to diverse learned behaviors
\citep{discern_wardefarley2019,skewfit_pong2020,mega_pitis2020,lexa_mendonca2021}.
On the other hand, unsupervised skill discovery, including our approach,
(1) focuses on more general behaviors (\eg, running, flipping) not limited to goal-reaching skills,
whose behaviors tend to be `static' \citep{lexa_mendonca2021,rest_jiang2022},
and (2) aims to learn a \emph{compact} set of distinguishable skills
embedded in a low-dimensional, possibly discrete skill space,
rather than finding all possible states,
making it more amenable to hierarchical RL
by providing a low-dimensional high-level action space (\ie, skill space).
While these two lines of approaches are not directly comparable,
we provide empirical comparisons and further discussion in \Cref{sec:appx_disag}.

\cutsectionup
\section{Controllability-Aware Skill Discovery}
\cutsectiondown
\label{sec:method}

To discover complex, useful skills without extrinsic reward and domain knowledge,
we introduce the notion of \emph{controllability}\footnote{
The term \textit{controllability} in this paper describes whether an agent can manipulate hard-to-control states (\eg, external objects) or not, different from the one used in control theory~\citep{ct_ogata2010}.
} to skill discovery --
once an agent discovers easy-to-achieve skills,
it continuously moves its focus to hard-to-control states and learns more diverse and complex skills.
We implement this idea in our Controllability-aware Skill Discovery (CSD) by combining a distance-maximizing skill discovery approach (\Cref{sec:dsd})
with a \emph{jointly} trained controllability-aware distance function (\Cref{sec:learned_distance}),
which enables the agent to find increasingly complex skills over the course of training (\Cref{sec:csd}).

\cutsubsectionup
\subsection{General Distance-Maximizing Skill Discovery}
\cutsubsectiondown
\label{sec:dsd}

As explained in \Cref{sec:lsd},
Euclidean distance-maximizing skill discovery does not necessarily maximize distances along hard-to-control states (\ie, hard-to-achieve skills).
To discover more challenging skills, we propose to learn a skill policy with respect to a jointly learned controllability-aware distance function.

To this end, we first present a general \textbf{Distance-maximizing Skill Discovery} approach (\textbf{DSD})
that can be combined with any arbitrary distance function $d(\cdot, \cdot): \gS \times \gS \to \sR_0^+$.
Specifically, we generalize the Euclidean distance-maximizing skill discovery \citep{lsd_park2022} by replacing $\| x - y \|$ in \Cref{eq:lsd_cst} with $d(x, y)$
as follows:
\begin{align}
    \gJ^\text{DSD} &:= \E_{z, s, s'}[(\phi(s') - \phi(s))^\top z] \label{eq:dsd_obj} \\
    & \text{s.t.} \quad \forall x, y \in \gS, \quad \| \phi(x) - \phi(y) \| \leq d(x, y), \label{eq:dsd_cst}
\end{align}
where
$\phi(\cdot): \gS \to \sR^D$ is
a function that maps states into a $D$-dimensional space
(which has the same dimensionality as the skill space).
DSD can discover skills that maximize the traveled distance under the given distance function $d$ in diverse directions
by (1)~aligning the directions of $z$ and $(\phi(s') - \phi(s))$ and (2)~maximizing its length $\|\phi(s') - \phi(s)\|$,
which also increases $d(s, s')$ due to the constraint in \Cref{eq:dsd_cst}.
Here, LSD can be viewed as a special case of DSD with $d(x, y) = \|x - y\|$.

When dealing with a \emph{learned} distance function $d$,
it is generally not straightforward to ensure that $d$ is a \emph{valid} distance (pseudo-)metric,
which must satisfy symmetry and the triangle inequality.
However, DSD has the nice property that $d$ in \Cref{eq:dsd_cst} does not have to be a valid metric.
This is because
DSD implicitly converts the original constraint (\Cref{eq:dsd_cst}) into the one
with a valid pseudometric $\tilde{d}$. %
As a result, we can use any arbitrary non-negative function $d$ for DSD,
with the semantics being implicitly defined by its \emph{induced pseudometric} $\tilde{d}$. 
We summarize our theoretical results as follows and the proofs are in \Cref{sec:appx_proof}.

\begin{theorem}
\label{thm:dsd}
Given any non-negative function $d: \gS \times \gS \to \sR^+_0$, there exists a valid pseudometric $\tilde{d}: \gS \times \gS \to \sR^+_0$ that satisfies the following properties:
\begin{enumerate}
\item Imposing \Cref{eq:dsd_cst} with $d$ is equivalent to imposing \Cref{eq:dsd_cst} with $\tilde{d}$, \ie,
\begin{align}
&\forall x, y \in \gS, \quad \|\phi(x)-\phi(y)\| \leq d(x, y) \\
\iff &\forall x, y \in \gS, \quad \|\phi(x)-\phi(y)\| \leq \tilde{d}(x, y).
\end{align}
\item $\tilde{d}$ is a valid pseudometric.
\item $\tilde{d}$ is a lower bound of $d$, \ie,
\begin{align}
\forall x, y \in \gS, \quad 0 \leq \tilde{d}(x, y) \leq d(x, y).
\end{align}
\end{enumerate}
\end{theorem}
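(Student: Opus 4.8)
The plan is to construct $\tilde{d}$ explicitly as the shortest-path (intrinsic) pseudometric induced by $d$, and then verify the three properties directly. The guiding observation is that the constraint in \Cref{eq:dsd_cst} is a Lipschitz-type condition on $\phi$ measured against the \emph{Euclidean} norm on the codomain, and that the Euclidean norm already satisfies symmetry and the triangle inequality. These two features of the codomain norm are exactly what force any feasible $\phi$ to respect a stronger, genuinely metric constraint, which will turn out to be $\tilde{d}$.

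Concretely, first I would symmetrize by setting $d_{\mathrm{sym}}(x,y) := \min(d(x,y), d(y,x))$. Because $\|\phi(x)-\phi(y)\| = \|\phi(y)-\phi(x)\|$, requiring $\|\phi(x)-\phi(y)\| \le d(x,y)$ for all ordered pairs is equivalent to requiring $\|\phi(x)-\phi(y)\| \le d_{\mathrm{sym}}(x,y)$, so replacing $d$ by $d_{\mathrm{sym}}$ changes neither the feasible set of $\phi$ nor the quantity we must bound. Then I would define
\[
\tilde{d}(x,y) := \inf \Big\{ \textstyle\sum_{i=0}^{n-1} d_{\mathrm{sym}}(x_i, x_{i+1}) : n \ge 0,\ x_0 = x,\ x_n = y \Big\},
\]
the infimum of summed $d_{\mathrm{sym}}$-weights over all finite chains from $x$ to $y$.

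Next I would check the pseudometric axioms (Property~2). Non-negativity is immediate since every summand is non-negative; $\tilde{d}(x,x)=0$ follows from the empty chain ($n=0$); symmetry follows from reversing a chain together with the symmetry of $d_{\mathrm{sym}}$; and the triangle inequality follows from concatenating a chain from $x$ to $y$ with one from $y$ to $z$, which is the standard argument for infimum-over-paths constructions. Property~3 is then nearly free: the single-edge chain from $x$ to $y$ gives $\tilde{d}(x,y) \le d_{\mathrm{sym}}(x,y) \le d(x,y)$, and non-negativity was just established.

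The crux is Property~1, the equivalence of the two constraints. The direction ``the $\tilde{d}$-constraint implies the $d$-constraint'' is immediate from Property~3. For the converse, suppose $\phi$ satisfies $\|\phi(x)-\phi(y)\| \le d(x,y)$ for all $x,y$; symmetrizing gives $\|\phi(x)-\phi(y)\| \le d_{\mathrm{sym}}(x,y)$. For any chain $x_0,\dots,x_n$, the triangle inequality of the Euclidean norm yields $\|\phi(x_0)-\phi(x_n)\| \le \sum_i \|\phi(x_i)-\phi(x_{i+1})\| \le \sum_i d_{\mathrm{sym}}(x_i,x_{i+1})$, and taking the infimum over all such chains gives $\|\phi(x)-\phi(y)\| \le \tilde{d}(x,y)$. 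This chaining step, propagating the pairwise bound along a path and then minimizing over paths, is the only nontrivial part of the argument, and it is precisely where subadditivity of the codomain norm is used; everything else is bookkeeping on the path-metric construction.
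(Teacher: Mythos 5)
Your proposal is correct and follows essentially the same route as the paper: symmetrize $d$ by taking the pointwise minimum over ordered pairs, define $\tilde{d}$ as the infimum of summed edge weights over finite chains, verify the pseudometric axioms by path reversal and concatenation, and establish the constraint equivalence by chaining the Euclidean triangle inequality along a path before taking the infimum. The only cosmetic difference is that you handle $\tilde{d}(x,x)=0$ via the empty chain rather than an explicit case split, which changes nothing.
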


\textbf{Training of DSD.}
While LSD implements the Lipshitz constraint in \Cref{eq:lsd_cst} using Spectral Normalization~\citep{sn_miyato2018},
similarly imposing DSD's constraint in \Cref{eq:dsd_cst} is not straightforward because it is no longer a Euclidean Lipschitz constraint.
Hence, we optimize our objective with dual gradient descent \citep{convex_boyd2004}:
\ie, with a Lagrange multiplier $\lambda \geq 0$,
we use the following dual objectives to train DSD:
\begin{align}
    r^\text{DSD} &:= (\phi(s') - \phi(s))^\top z, \label{eq:dsd1} \\
    \gJ^{\text{DSD},\phi} &:= \E [ (\phi(s') - \phi(s))^\top z \nonumber \\
     & \quad + \lambda \cdot \min (\epsilon, d(x, y) - \|\phi(x) - \phi(y)\|) ], \label{eq:dsd2} \\
    \gJ^{\text{DSD},\lambda} &:= -\lambda \cdot \E [\min (\epsilon, d(x, y) - \|\phi(x) - \phi(y)\|) ], \label{eq:dsd3}
\end{align}
where $r^\text{DSD}$ is the intrinsic reward for the policy,
and $\gJ^{\text{DSD},\phi}$ and $\gJ^{\text{DSD},\lambda}$
are the objectives for $\phi$ and $\lambda$, respectively.
$x$ and $y$ are sampled from some state pair distribution $p^{\text{cst}}(x, y)$
that imposes the constraint in \Cref{eq:dsd_cst}.
$\epsilon > 0$ is a slack variable
to avoid the gradient of $\lambda$ always being non-negative.
With these objectives, we can train DSD by optimizing the policy with \Cref{eq:dsd1} as an intrinsic reward
while updating the other components with \Cref{eq:dsd2,eq:dsd3}.

\cutsubsectionup
\subsection{Controllability-Aware Distance Function}
\cutsubsectiondown
\label{sec:learned_distance}

To guide distance-maximizing skill discovery to focus on more challenging skills,
a distance function $d$ is required to assign larger values to state transitions that are hard-to-achieve with the current skills
and smaller values to easy-to-achieve transitions.
$d$ also needs to be adaptable to the current skill policy
so that the agent continuously acquires new skills
and finds increasingly difficult state transitions over the course of training. 

Among many potential distance functions,
we choose a negative log-likelihood of a transition from the current skill policy,
$-\log p(s'|s)$, as a \emph{controllability-aware distance function} in this paper.
Accordingly, we define the degree to which a transition is ``hard-to-achieve'' as $-\log p(s'|s)$
with respect to the current skill policy's transition distribution.
This suits our desiderata since
(1)~it assigns high values for rare transitions (\ie, low $p(s'|s)$)
while assigns small values for frequently visited transitions (\ie, high $p(s'|s)$);
(2)~$p(s'|s)$ can be approximated by training a density model $q_\theta(s'|s)$ from policy rollouts;
and (3)~the density model $q_\theta(s'|s)$ continuously adjusts to the current skill policy
by jointly training it with the skill policy.
Here, while it is also possible to employ multi-step transitions $p(s_{t+k}|s_t)$
for the distance function,
we stick to the single-step version for simplicity.
We note that even though we employ single-step log-likelihoods,
DSD maximizes the sum of rewards, $\sum_{t=0}^{T-1} (\phi(s_{t+1}) - \phi(s_t))^\top z = (\phi(s_T) - \phi(s_0))^\top z$
for the trajectory $(s_0, a_0, s_1, \dots, s_T)$,
which maximizes the traveled distance of the \emph{whole} trajectory while maintaining the directional alignment with $z$.

\cutsubsectionup
\subsection{Controllability-Aware Skill Discovery}
\cutsubsectiondown
\label{sec:csd}

Now, we introduce \textbf{Controllability-aware Skill Discovery} (\textbf{CSD}),
a distance-maximizing skill discovery method with our controllability-aware distance function. %
With the distance function in \Cref{sec:learned_distance}
we can rewrite the constraint of DSD in \Cref{eq:dsd_cst} as follows:
\begin{align}
    \forall s, s' \in \gS, &\quad \| \phi(s) - \phi(s') \| \leq d^{\text{CSD}}(s, s'), \\
    d^{\text{CSD}}(s, s') &\triangleq (s' - \mu_\theta(s))^\top \Sigma_\theta^{-1}(s) (s' - \mu_\theta(s)) \\
    &\propto - \log q_\theta(s'|s) + (\text{const}), \label{eq:csd_dist}
\end{align}
where the density model is parameterized as $q_\theta(s'|s) = \gN(\mu_\theta(s), \Sigma_\theta(s))$,
which is jointly trained using $(s, s')$ tuples collected by the skill policy.
We also use the same $p(s, s')$ distribution from the skill policy
for the dual constraint distribution $p^{\text{cst}}(x, y)$ introduced in \Cref{sec:dsd} as well.
Here, we note that $d^{\text{CSD}}(\cdot, \cdot)$ is not necessarily a valid distance metric;
however, we can still use it for the constraint in \Cref{eq:dsd_cst} according to \Cref{thm:dsd},
because it automatically transforms $d^{\text{CSD}}$ into its induced valid pseudometric $\tilde{d}^{\text{CSD}}$.
Further discussion about its implications and limitations can be found in \Cref{sec:appx_implication}.

CSD has several main advantages.
First, the agent actively seeks rare state transitions and thus acquires increasingly complex skills
over the course of training,
which makes the skills discovered more useful for downstream tasks.
In contrast, LSD or previous MI-based approaches
only maximize Euclidean distances or are even agnostic to traveled distances,
which often leads to simple or static behaviors.
Second,
unlike LSD, the optimal behaviors of CSD are agnostic to the semantics and scales of each dimension of the state space; thus, CSD does not require domain knowledge about the state space.
Instead, the objective of CSD only depends on the difficulty or sparsity of state transitions.
Finally, unlike curiosity- or disagreement-based exploration methods
that only seek unseen transitions \citep{icm_pathak2017,disag_pathak2019,lexa_mendonca2021},
CSD finds a balance between covering unseen transitions 
and learning maximally different skills across $z$s via directional alignments,
which leads to diverse yet consistent skills.

\newlength{\textfloatsepsave} \setlength{\textfloatsepsave}{\textfloatsep} \setlength{\textfloatsep}{0pt}
\begin{algorithm}[t!]
    \caption{Controllability-aware Skill Discovery (CSD)}
    \begin{algorithmic}[1]
        \STATE Initialize skill policy $\pi(a|s, z)$, function $\phi(s)$,
        conditional density model $q_\theta(s'|s)$, Lagrange multiplier $\lambda$
        \FOR{$i \gets 1$ to (\# epochs)}
            \FOR{$j \gets 1$ to (\# episodes per epoch)}
                \STATE Sample skill $z \sim p(z)$
                \STATE Sample trajectory $\tau$ with $\pi(a|s, z)$
            \ENDFOR
            \STATE Fit conditional density model $q_\theta(s'|s)$ using current trajectory samples
            \STATE Update $\phi(s)$ with gradient ascent on $\gJ^{\text{DSD},\phi}$ %
            \STATE Update $\lambda$ with gradient ascent on $\gJ^{\text{DSD},\lambda}$ %
            \STATE \mbox{Update $\pi(a|s, z)$ using SAC with intrinsic reward $r^{\text{DSD}}$} %
        \ENDFOR
    \end{algorithmic}
    \label{alg:csd}
\end{algorithm}
\setlength{\textfloatsep}{\textfloatsepsave}
\begin{figure}[t!]
    \centering
    \includegraphics[width=\linewidth]{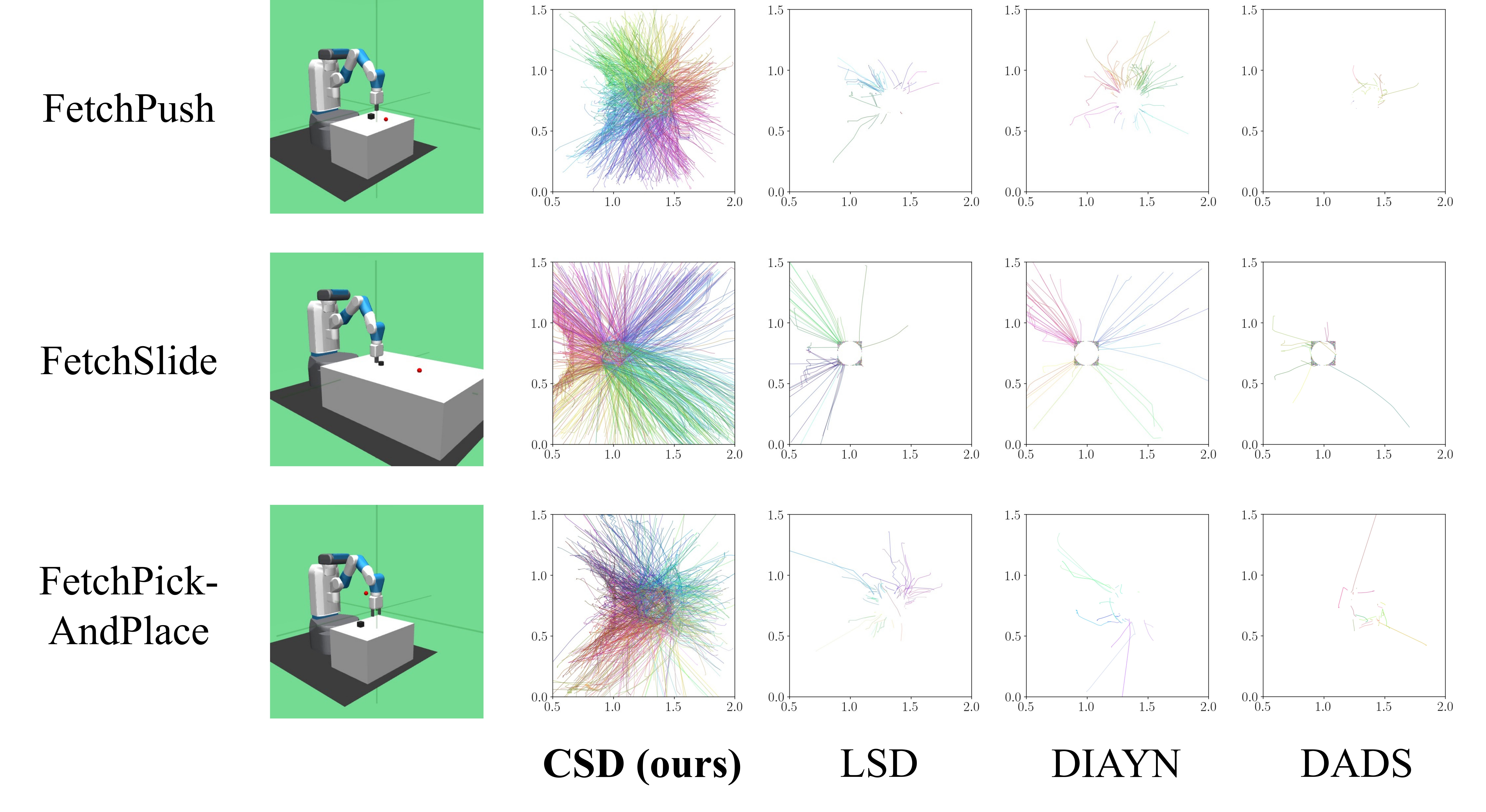}
    \vspace{-2em}
    \caption{
    The object trajectories in the $xy$ plane of randomly sampled $1000$ continuous skills
    learned by CSD, LSD, DIAYN, and DADS in three Fetch manipulation environments without any supervision.
    Trajectories with different colors represent different skills.
    Only CSD learns to manipulate the object across all three tasks without supervision
    while other methods focus only on moving the robot arm.
    We refer to \Cref{sec:appx_add_results} for the complete qualitative results
    from all random seeds. %
    }
    \vspace{-15pt}
    \label{fig:qual_fetch}
\end{figure}

\textbf{Training of CSD.}
We train the skill policy $\pi(a|s, z)$ with Soft Actor-Critic (SAC) \citep{saces_haarnoja2018} with
\Cref{eq:dsd1} as an intrinsic reward.
We train the other components with stochastic gradient descent.
We summarize the training procedure of CSD in \Cref{alg:csd}
and provide the full implementation details in \Cref{sec:appx_impl_details}.

\begin{figure*}[t!]
    \centering
    \includegraphics[width=\linewidth]{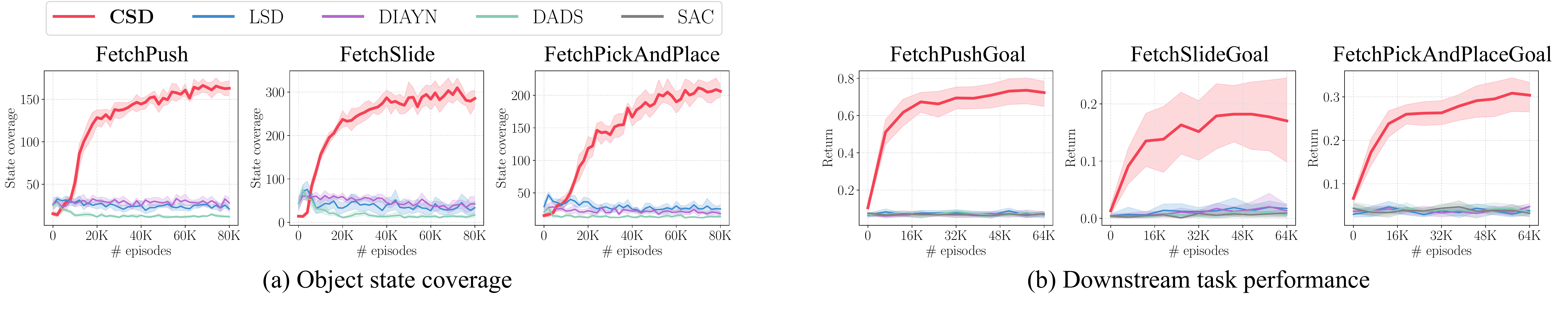}
    \vspace{-25pt}
    \caption{
    Comparison of the object state coverage and downstream task performances
    of skill discovery methods in three Fetch manipulation environments.
    Only CSD learns to manipulate the object without external supervision,
    while the other methods mainly focus on controlling the internal states (\Cref{fig:qual_fetch_all})
    because there is little incentive for them to discover more `challenging' skills.
    }
    \vspace{-10pt}
    \label{fig:fetch_coverage_down}
\end{figure*}

\cutsectionup
\section{Experiments}
\cutsectiondown
\label{sec:exp}

The goal of our experiments is to verify
whether our controllability-aware skill discovery method can learn complex, useful skills
without supervision in a variety of environments.
We test CSD on six environments across three different domains:
three Fetch manipulation environments (FetchPush, FetchSlide, and FetckPickAndPlace)~\citep{fetch_plappert2018},
Kitchen~\citep{kitchen_gupta2019},
and two MuJoCo locomotion environments (Ant and HalfCheetah)~\citep{mujoco_todorov2012,openaigym_brockman2016}.
We mainly compare CSD with three state-of-the-art unsupervised skill discovery methods:
LSD~\citep{lsd_park2022}, DIAYN~\citep{diayn_eysenbach2019}, and DADS~\citep{dads_sharma2020}.
They respectively fall into the categories of Euclidean distance-maximizing skill discovery,
reverse-MI, and forward-MI (\Cref{sec:related_work}).
We also compare with disagreement-based exploration used in unsupervised goal-conditioned RL,
such as LEXA~\citep{lexa_mendonca2021}, in \Cref{sec:appx_disag}.
We evaluate state coverage and performance on downstream tasks to assess the diversity and usefulness of the skills learned by each method.
For our quantitative experiments,
we use $8$ random seeds and present $95\%$ confidence intervals using error bars or shaded areas.
We refer to \csdlink for videos.

\cutsubsectionup
\subsection{Fetch Manipulation}
\cutsubsectiondown
\label{sec:exp_fetch}

We first show
(1)~whether CSD can acquire object manipulation skills without any supervision,
(2)~how useful the learned skills are for the downstream tasks,
and (3)~which component allows CSD to learn complex skills in the Fetch manipulation environments~\citep{fetch_plappert2018}.
Each Fetch environment consists of a robot arm and an object
but has a unique configuration;
\eg, FetchSlide has a slippery table and FetchPickAndPlace has a two-fingered gripper.

We train CSD, LSD, DIAYN, and DADS on the three Fetch environments for $80$K episodes
with 2-D continuous skills (FetchPush, FetchSlide) or 3-D continuous skills (FetchPickAndPlace).
Note that we do not leverage human prior knowledge on the state space (\eg, object pose);
thus, all methods are trained on the \emph{full state} in this experiment.\footnote{
We note that the Fetch experiments in the LSD paper~\citep{lsd_park2022} are using the `oracle' prior, which enforces an agent to only focus on the state change of the object.
}

\begin{figure}[t!]
    \centering
    \includegraphics[width=\linewidth]{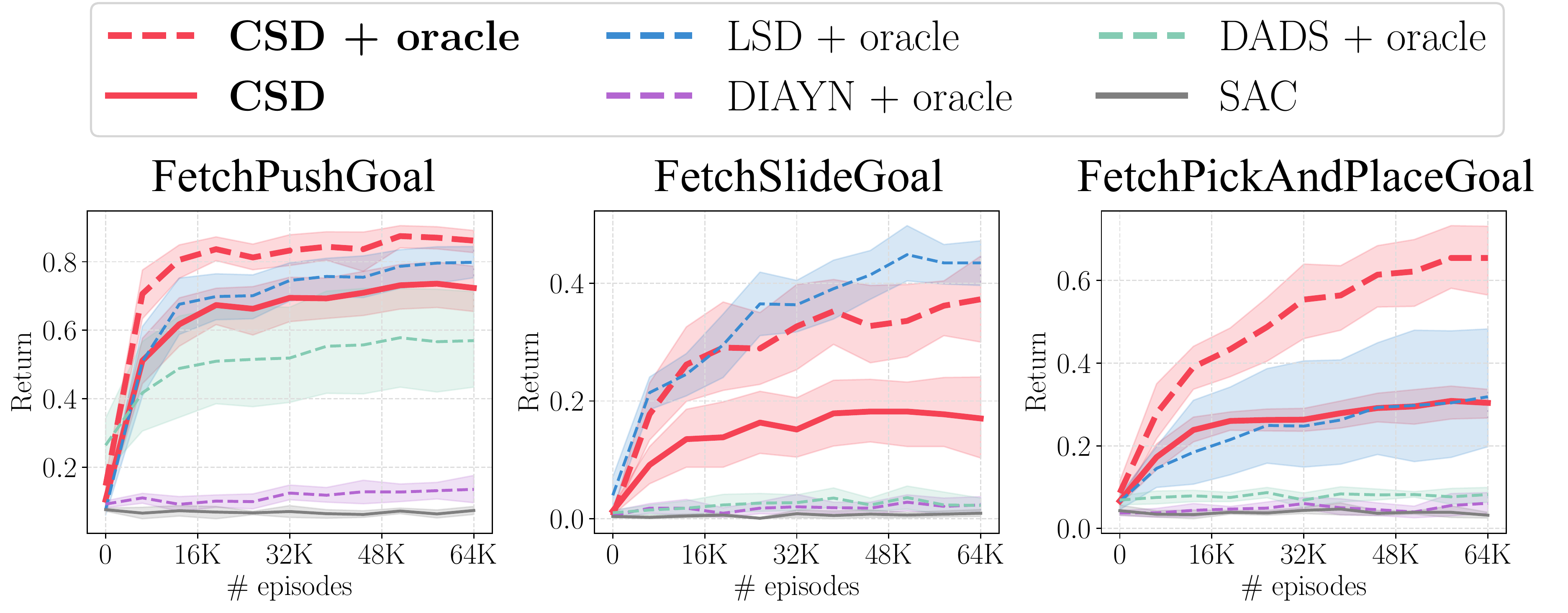}
    \vspace{-2em}
    \caption{
    Comparison of the downstream task performances of skill discovery methods with the oracle prior,
    which restricts the input to the skill discriminators to the object $xyz$ coordinates.
    }
    \vspace{-10pt}
    \label{fig:fetch_down_oracle}
\end{figure}

\Cref{fig:qual_fetch} illustrates the object trajectories of continuous skills learned
by skill discovery methods in the absence of any supervision.
CSD successfully learns to move the object in diverse directions without external supervision.
On the other hand, all of the previous methods
fail to learn such skills and instead focus on diversifying the joint angles of the robot arm itself.
This is because there is no incentive for the previous methods to focus on \emph{challenging} skills such as object manipulation,
while CSD explicitly finds hard-to-achieve state transitions.

Following the setup in \citet{lsd_park2022}, we evaluate two quantitative metrics:
the object state coverage and goal-reaching downstream task performance.
\Cref{fig:fetch_coverage_down}a compares the four skill discovery methods
in terms of the object state coverage,
which is measured by the number of $0.1 \times 0.1$ square bins occupied by the object at least once,
in the three Fetch environments.
\Cref{fig:fetch_coverage_down}b shows
the comparison of the goal-reaching downstream task performances,
where we train a hierarchical controller $\pi^h(z|s, g)$
that sequentially combines skills $z$ for the frozen skill policy $\pi(a|s, z)$
to move the object to a goal position $g$.
We additionally train the vanilla SAC baseline to verify the effectiveness of leveraging autonomously discovered skills.
We refer to \Cref{sec:appx_down} for further details.
On both quantitative metrics, CSD outperforms the prior methods by large margins,
successfully discovering diverse manipulation skills that are useful for solving downstream tasks.

\begin{figure}[t!]
    \centering
    \includegraphics[width=\linewidth]{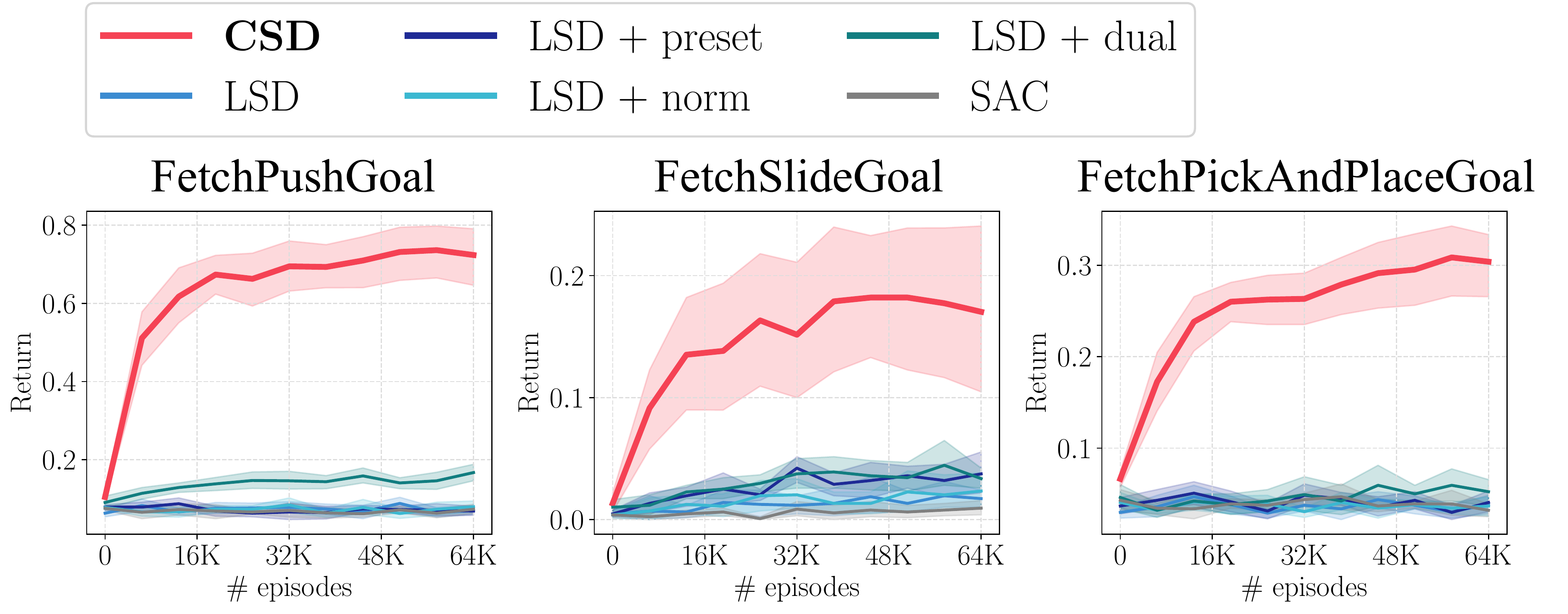}
    \vspace{-2em}
    \caption{
    Ablation study
    of distance-maximizing skill discovery in three Fetch environments.
    This suggests that CSD's performance cannot be achieved by
    just applying simple tricks to the previous Euclidean distance-maximizing skill discovery method.
    }
    \vspace{-15pt}
    \label{fig:fetch_down_ablation}
\end{figure}

\begin{figure*}[t!]
    \centering
    \includegraphics[width=\linewidth]{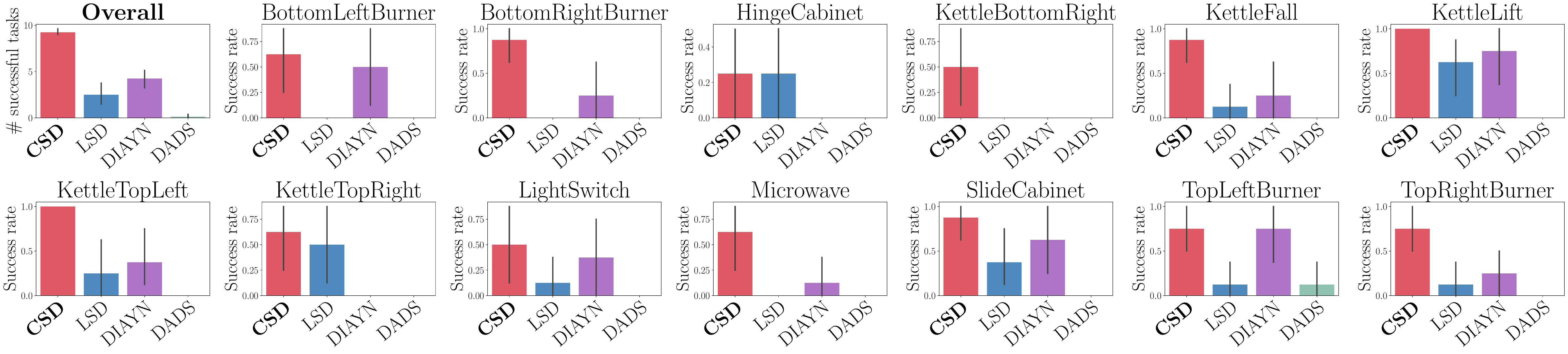}
    \vspace{-20pt}
    \caption{
    Task success rates of $16$ discrete skills discovered by CSD, LSD, DIAYN, and DADS in the Kitchen environment.
    CSD learns to manipulate diverse objects in the kitchen without any supervision.
    We refer to \Cref{sec:appx_add_results} for the results with 2-D continuous skills.
    }
    \vspace{-10pt}
    \label{fig:kitchen_coin_16}
\end{figure*}

\textbf{Skill discovery with the oracle prior on the state space.}
While our experiments show that our approach can discover useful manipulation skills without any human prior on the state space, %
previous unsupervised skill discovery methods~\citep{diayn_eysenbach2019,dads_sharma2020,lsd_park2022} mostly do not work without the \emph{oracle state prior},
which restricts the skill discriminator module's input to only the $xyz$ coordinates of the object.
To investigate how CSD and the prior methods perform in the presence of this supervision,
we train them with the oracle state prior.
\Cref{fig:fetch_down_oracle} demonstrates that even without the oracle state prior,
our CSD is mostly comparable to
the previous best method with the oracle prior.
This result demonstrates the potential of our approach in scalability to more complex environments, where human prior is no longer available.
Moreover, with the oracle state prior, CSD further improves its performance.
We refer to \Cref{fig:qual_fetch_oracle_all} for the full qualitative results of CSD and LSD with the oracle prior in FetchPickAndPlace.

\begin{figure}[t!]
    \centering
    \includegraphics[width=\linewidth]{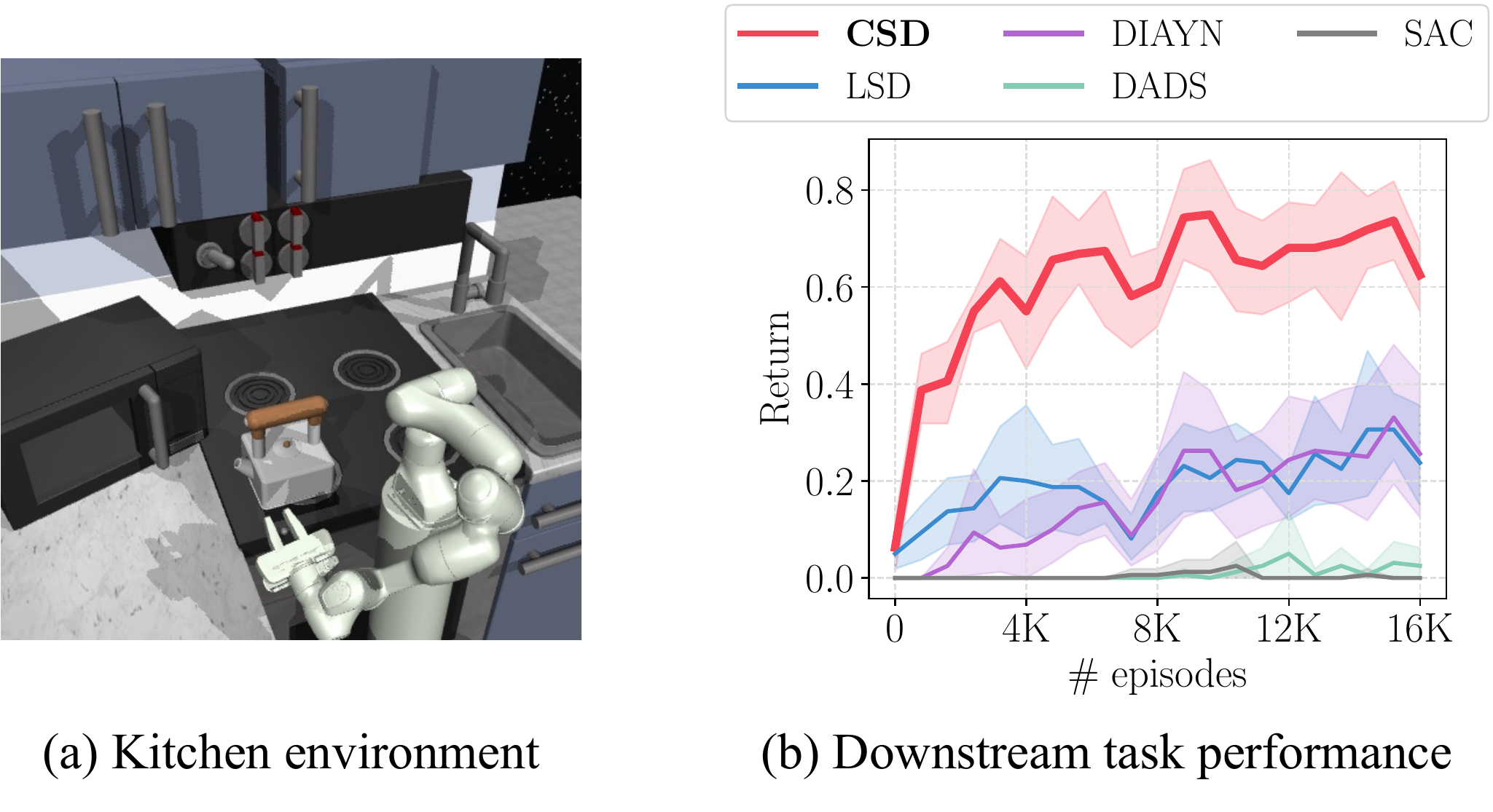}
    \vspace{-20pt}
    \caption{
    Comparison of the downstream task performances of skill discovery methods in the Kitchen environment.
    }
    \vspace{-10pt}
    \label{fig:kitchen_down}
\end{figure}

\begin{figure}[ht]
    \centering
    \includegraphics[width=\linewidth]{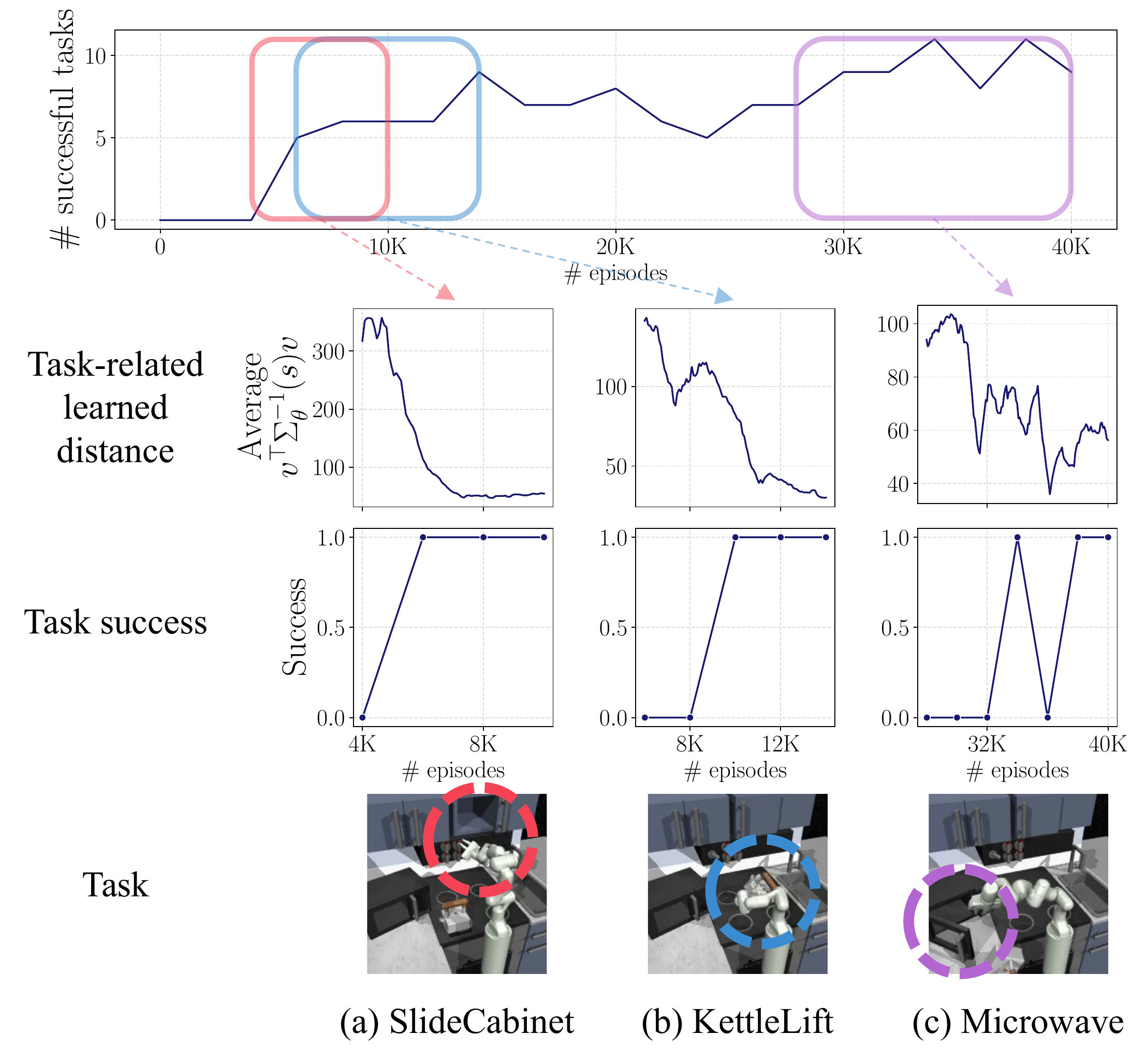}
    \vspace{-2em}
    \caption{
    Evolution of task-related distances and corresponding task success rates.
    Our learned task-related distances decrease once the agent gains control of the corresponding objects,
    which makes the agent focus on other new objects consistently over the course of training.
    Distance plots are smoothed over a window of size $10$ for better visualization.
    }
    \vspace{-10pt}
    \label{fig:kitchen_analysis}
\end{figure}

\textbf{Ablation study.}
To understand the importance of our controllability-aware distance function in CSD,
we examine whether similar results can be achieved without some components of CSD
or by just applying simple tricks to LSD, a previous Euclidean distance-maximizing skill discovery method.
Specifically, we consider the following three variants:
(1) LSD\,+\,preset:
LSD with a normalized state space
using the precomputed standard deviation of each state dimension from randomly generated trajectories,
(2) LSD\,+\,norm: LSD with a normalized state space using the moving average of the standard deviation of state differences ($s'-s$),
and (3) LSD\,+\,dual: LSD trained with dual gradient descent instead of spectral normalization
(\ie, CSD without our learned distance function).
\Cref{fig:fetch_down_ablation} compares the performances of these variants with CSD, LSD, and SAC in three downstream tasks.
The results show that only CSD learns to manipulate objects,
which suggests that our controllability-aware distance function is indeed necessary
to discover such complex skills without supervision.

\cutsubsectionup
\subsection{Kitchen Manipulation}
\cutsubsectiondown
\label{sec:exp_kitchen}

To verify the scalability of unsupervised skill discovery in a complex environment with diverse objects,
we evaluate our method on the Kitchen manipulation environment~\citep{kitchen_gupta2019},
which includes $13$ downstream tasks in total, such as opening a microwave, turning a light switch, moving a kettle, and opening slide/hinge cabinet doors (\Cref{fig:kitchen_down}a).
We train CSD, LSD, DIAYN, and DADS with both 2-D continuous skills and $16$ discrete skills
for $40$K episodes without any supervision.
We refer to \Cref{sec:appx_impl_details} for further experimental details regarding the Kitchen environment.

We first measure the task success rates of the skills learned by the four methods.
After the unsupervised skill training, we roll out the skill policy to collect $50$ trajectories with $50$ randomly sampled $z$s
and measure whether each of the $13$ tasks has at least one successful trajectory.
The results with $16$ discrete skills in \Cref{fig:kitchen_coin_16} suggest that CSD learns on average $10$ out of $13$ skills,
while the prior methods fail to discover such skills ($2$ for LSD, $4$ for DIAYN, $0$ for DADS)
because they mainly focus on diversifying the robot state.
Continuous skills in \Cref{fig:kitchen_coin_2} also show similar results.

We then evaluate the downstream task performance
by training a high-level controller $\pi^h(z | s, g)$ with the learned 2-D continuous skills $\pi(a | s, z)$
as behavioral primitives to achieve a task specified by a 13-D one-hot vector $g$.
The high-level controller chooses a skill $z$ every $10$ steps until the episode ends.
The results in \Cref{fig:kitchen_down}b show that CSD significantly outperforms the previous methods.

\textbf{Qualitative analysis.}
\Cref{fig:kitchen_analysis} illustrates how our controllability-aware distance evolves over time
and how this leads to the discovery of diverse, complex skills, \eg, SlideCabinet, KettleLift, and Microwave. %
Over training, we measure the task-related controllability-aware distance
$v^\top \Sigma_\theta^{-1}(s) v$ for each task $v$ using skill trajectories,
where $v$ is the one-hot task vector corresponding to each of the three tasks.
At around $4$K episodes (\Cref{fig:kitchen_analysis}a),
our controllability-aware distance encourages the agent to control the sliding cabinet with a large distance value (\ie, high reward).
Once the agent learns to manipulate the sliding cabinet door,
our controllability-aware distance for that skill decreases,
letting the agent move its focus to other harder-to-achieve skills,
\eg, lifting kettle (\Cref{fig:kitchen_analysis}b) or opening a microwave (\Cref{fig:kitchen_analysis}c).
As a result, the number of successful tasks gradually increases over the course of training.

\cutsubsectionup
\subsection{MuJoCo Locomotion}
\cutsubsectiondown
\label{sec:exp_mujoco}

\begin{figure}[t!]
    \centering
    \includegraphics[width=\linewidth]{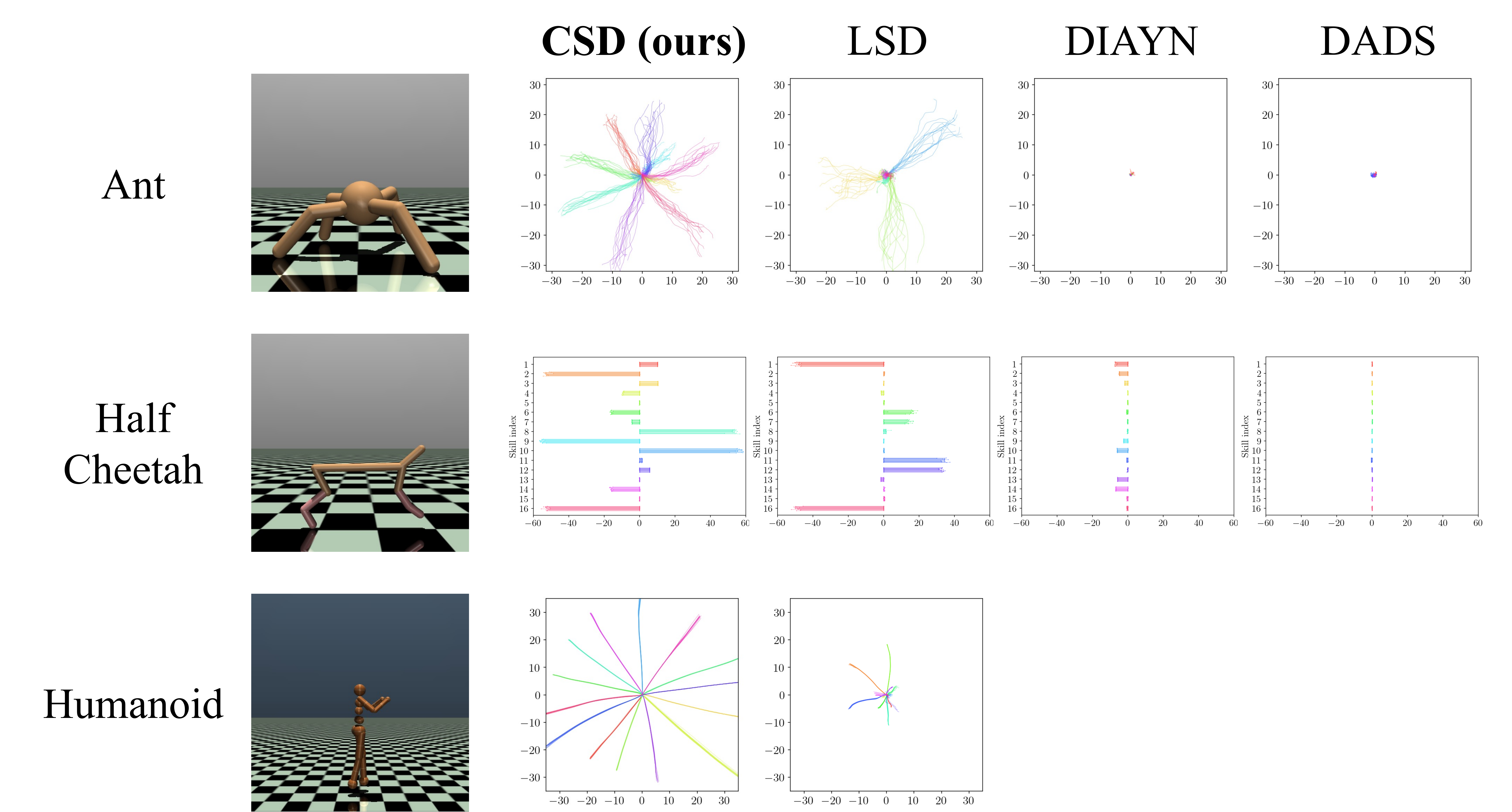}
    \vspace{-2em}
    \caption{
    The agent's $xy$ (Ant and Humanoid) or $x$ (HalfCheetah) trajectories of skills discovered by CSD, LSD, DIAYN, and DADS in MuJoCo locomotion environments.
    Trajectories with different colors represent different skills.
    We refer to \Cref{sec:appx_add_results} for the complete qualitative results
    from all random seeds.
    }
    \vspace{-10pt}
    \label{fig:qual_mujoco}
\end{figure}

To assess whether the idea of controllability-aware skill discovery works on domains
other than manipulation,
we evaluate CSD mainly on two MuJoCo locomotion environments \citep{mujoco_todorov2012,openaigym_brockman2016}: Ant and HalfCheetah.
We additionally employ $17$-DoF Humanoid, the most complex environment in the benchmark, for a qualitative comparison between CSD and LSD.
In these environments, we train skill discovery methods for $200$K episodes ($100$K for Humanoid) with $16$ discrete skills. %

\Cref{fig:qual_mujoco} shows examples of skills discovered by each method,
which suggests that CSD leads to the largest state coverage thanks to our controllability-aware distance function.
For quantitative evaluation, we first measure the state space coverage by counting the number of $1 \times 1$ bins occupied by the agent's $xy$ coordinates ($xz$ coordinates for 2-D HalfCheetah) at least once.
\Cref{fig:mj}a demonstrates that CSD covers the largest area among the four methods.
This is because CSD's controllability objective makes the agent mainly focus on diversifying the global position of the agent,
which corresponds to the `challenging' state transitions in these locomotion environments.
We emphasize that CSD not just learns to navigate in diverse directions but also learns a variety of behaviors,
such as rotating and flipping in both environments (\csdvideo).
We also note that MI-based methods (DIAYN and DADS) completely fail to diversify the agent's location
and only discover posing skills,
because the MI objective is agnostic to the distance metric, not providing incentives to maximize traveled distances in the state space.

\begin{figure}[t!]
    \centering
    \includegraphics[width=\linewidth]{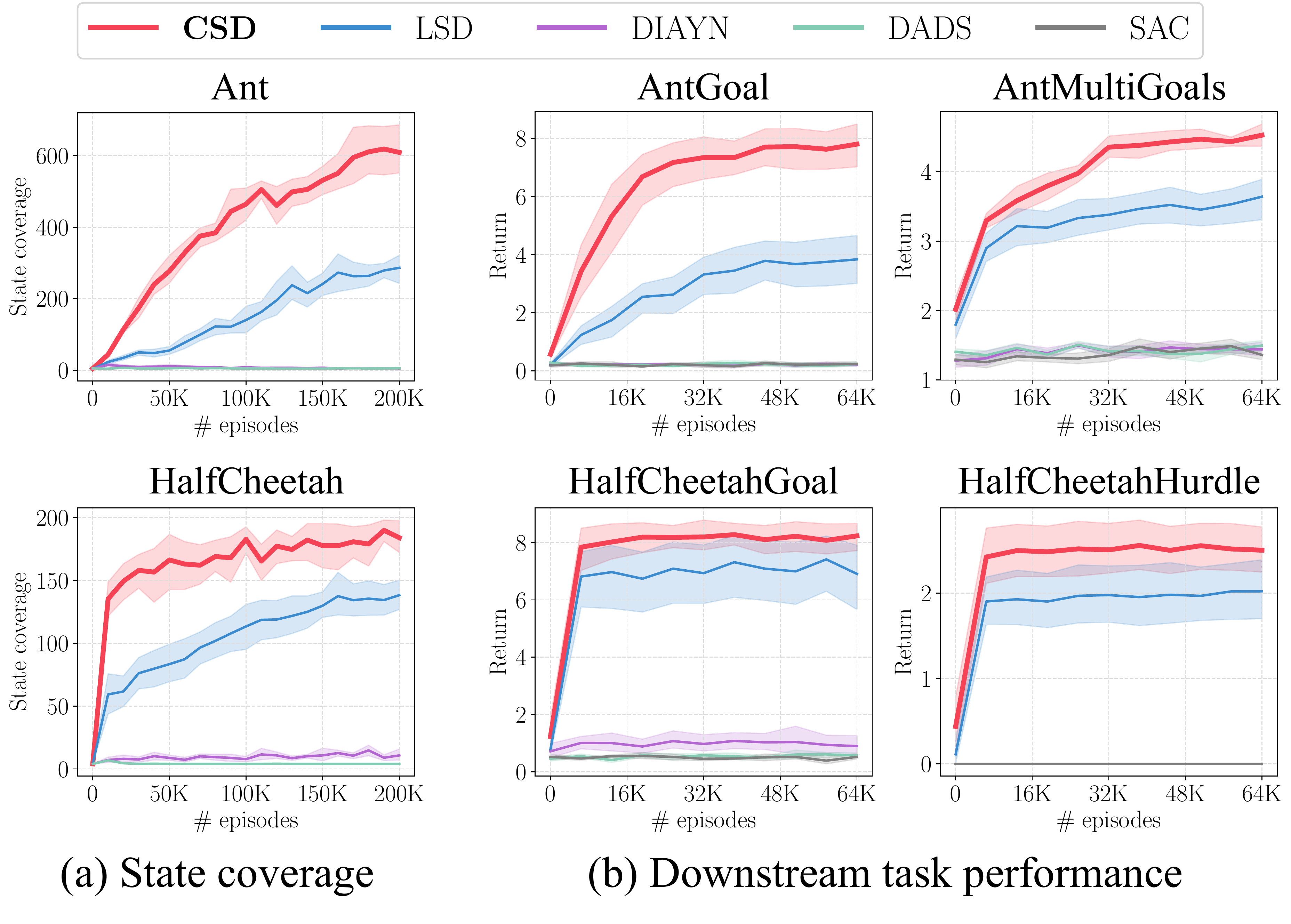}
    \vspace{-20pt}
    \caption{
    Comparison of the state coverage and downstream task performance of skills discovery methods in Ant and HalfCheetah.
    }
    \vspace{-10pt}
    \label{fig:mj}
\end{figure}

We also evaluate the downstream learning performance on four tasks:
AntGoal, AntMultiGoals, HalfCheetahGoal, and HalfCheetahHurdle,
following previous works \citep{diayn_eysenbach2019,dads_sharma2020,ibol_kim2021,lsd_park2022}.
In AntGoal and HalfCheetahGoal, the agent should reach a randomly sampled goal position,
and in AntMultiGoals, the agent should follow multiple randomly sampled goals in sequence.
In HalfCheetahHurdle \citep{hurdle_qureshi2020}, the agent should jump over as many hurdles as possible.
With downstream task rewards, we train a high-level policy that sequentially combines the learned skills.
In \Cref{fig:mj}b,
CSD consistently demonstrates the best performance among the four methods,
which suggests that the skills discovered by CSD are effective
not just on locomotion tasks but also on a wide variety of tasks, such as hurdle jumping.

\cutsectionup
\section{Conclusion}
\cutsectiondown

In this paper, we present Controllability-aware Skill Discovery (CSD), a novel unsupervised skill discovery method
that explicitly looks for hard-to-achieve skills.
Specifically, we first formulate a distance-maximizing skill discovery approach (DSD),
which can be combined with any arbitrary distance function.
We then propose a jointly trained controllability-aware distance function,
which consistently encourages the agent to discover more complex, hard-to-achieve skills.
We empirically show that the idea of controllability-awareness enables the agent to acquire diverse complex skills
in the absence of supervision in a variety of robotic manipulation and locomotion environments.

\textbf{Limitations and future directions.}
Although the general idea of controllability-aware skill discovery is still applicable to pixel domains,
\eg, in combination with representation learning techniques \citep{dreamer_hafner2020,curl_srinivas2020,mwm_seo2022},
where they will reveal both the object and agent representations and CSD will focus on the object representation,
we did not verify the scalability of our controllability-aware distance function
to pixel-based environments.
We leave it as future work.
Another limitation is that CSD in its current form might not discover `slowly moving' skills
because underlying DSD prefers skills with large state variations.
We believe acquiring skills with diverse moving speeds is another interesting future direction.

\section*{Acknowledgement}
We would like to thank Amber Xie, Younggyo Seo, and Jaekyeom Kim for their insightful feedback and discussion.
This work was funded in part by Darpa RACER, Komatsu, a Berkeley Graduate Fellowship, and the BAIR Industrial Consortium.
Seohong Park was partly supported by Korea Foundation for Advanced Studies (KFAS).

\bibliography{csd}
\bibliographystyle{icml2023}

\newpage
\appendix
\onecolumn
\section{Extended Related Work on Unsupervised RL}
\label{sec:appx_related}

The goal of unsupervised RL is to learn useful knowledge,
such as dynamics models, state representations, and behavioral primitives,
without predefined tasks
so that we can later utilize them to efficiently solve downstream tasks.
One line of research focuses on gathering knowledge of the environment with pure exploration
\citep{icm_pathak2017,rnd_burda2019,disag_pathak2019,p2e_sekar2020,apt_liu2021,protorl_yarats2021,umbrl_rajeswar2022}.
Unsupervised skill discovery methods
\citep{vic_gregor2016,sectar_coreyes2018,diayn_eysenbach2019,dads_sharma2020,ibol_kim2021,upside_kamienny2022,disdain_strouse2022,lsd_park2022,disk_shafiullah2022,rest_jiang2022,mos_zhao2022}
aim to learn a set of temporally extended useful behaviors,
and our CSD falls into this category.
Another line of work focuses on discovering \emph{goals} and corresponding goal-conditioned policies
via pure exploration \citep{discern_wardefarley2019,skewfit_pong2020,mega_pitis2020,lexa_mendonca2021}
or asymmetric/curriculum self-play \citep{asp_sukhbaatar2018,asp_openai2021,cusp_du2022}.
Lastly, \citet{fb_touati2021,zs_touati2022} aim to learn a set of policies that can be instantly adapted to task reward functions
given an unsupervised exploration method or an offline dataset.

\section{Theoretical Results}

\subsection{Proof of \Cref{thm:dsd}}
\label{sec:appx_proof}

We assume that we are given an arbitrary non-negative function $d: \gS \times \gS \to \sR^+_0$.
We first introduce some additional notations.
For $x, y \in \gS$, define $d_s(x, y) \triangleq \min(d(x, y), d(y, x))$.
For $x, y \in \gS$, let $P(x, y)$ be the set of all finite state paths from $x$ to $y$.
For a state path $p = (s_0, s_1, \dots, s_t)$,  %
define $D_s(p) \triangleq \sum_{i=0}^{t-1} d_s(s_i, s_{i+1})$.

Now, for $x, y \in \gS$, we define the \emph{induced pseudometric} $\tilde{d}: \gS \times \gS \to \sR^+_0$ as follows:
\begin{align}
    \tilde{d}(x, y) \triangleq
    \begin{cases}
    \inf_{p \in P(x, y)} D_s(p) & \text{if } x \neq y \\
    0               & \text{if } x = y
    \end{cases}.
\end{align}

Then, the following theorems hold.

\begin{lemma} \label{thm:lowerbound}
$\tilde{d}$ is a lower bound of $d$, \ie,
\begin{align}
\forall x, y \in \gS, \quad 0 \leq \tilde{d}(x, y) \leq d(x, y).
\end{align}
\end{lemma}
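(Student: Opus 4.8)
The plan is to prove Lemma~\ref{thm:lowerbound} directly from the definition of $\tilde{d}$, splitting into the two cases in the definition. The lower bound $0 \leq \tilde{d}(x,y)$ is immediate: when $x = y$ we have $\tilde{d}(x,y) = 0$ by fiat, and when $x \neq y$ the quantity $\tilde{d}(x,y) = \inf_{p \in P(x,y)} D_s(p)$ is an infimum of sums of terms $d_s(s_i, s_{i+1})$, each of which is non-negative because $d_s(x,y) = \min(d(x,y), d(y,x)) \geq 0$ (as $d$ maps into $\sR^+_0$); an infimum of a set of non-negative reals is non-negative. So the left inequality holds in both cases.

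For the upper bound $\tilde{d}(x,y) \leq d(x,y)$, I would again split on the two cases. When $x = y$, we need $0 \leq d(x,x)$, which holds since $d$ is non-negative. When $x \neq y$, the key observation is that the trivial two-point path $p^* = (x, y)$ is itself an element of $P(x,y)$, the set of all finite state paths from $x$ to $y$. For this path, $D_s(p^*) = d_s(x, y) = \min(d(x,y), d(y,x)) \leq d(x,y)$. Since $\tilde{d}(x,y)$ is the infimum over \emph{all} paths in $P(x,y)$, it is bounded above by the value $D_s(p^*)$ attained at this particular path, giving
\begin{align}
\tilde{d}(x,y) \leq D_s(p^*) = d_s(x,y) \leq d(x,y).
\end{align}
This chain of inequalities closes the upper bound.

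The argument is essentially a routine unpacking of the definitions, so I do not anticipate a genuine obstacle; the only point requiring care is making sure the single-edge path $(x,y)$ is indeed admissible in $P(x,y)$ and that the convention $D_s$ of a two-point path reduces to the single term $d_s(x,y)$ (i.e.\ the sum $\sum_{i=0}^{t-1}$ with $t=1$ contains exactly the one summand $d_s(s_0, s_1)$). Once that is granted, the infimum-is-a-lower-bound-of-any-member principle does all the work, and the passage through the symmetrized $d_s$ is what lets us conclude $\tilde{d}(x,y) \leq d(x,y)$ even though $d$ itself need not be symmetric. I would state the two cases explicitly in the writeup to keep the $x=y$ boundary case from being overlooked.
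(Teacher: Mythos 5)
Your proposal is correct and follows essentially the same route as the paper: both handle $x = y$ by the definitional convention and, for $x \neq y$, bound the infimum by the value $D_s((x,y)) = d_s(x,y) \leq d(x,y)$ of the single-edge path. Your additional remarks on the non-negativity of the infimum and the admissibility of the two-point path are routine elaborations of what the paper leaves implicit.
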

\begin{proof}
If $x = y$, then $\tilde{d}(x, y) = 0$ by definition and thus $0 \leq \tilde{d}(x, y) \leq d(x, y)$ always holds.
Otherwise, $0 \leq \tilde{d}(x, y) \leq D_s((x, y)) = d_s(x, y) \leq d(x, y)$ holds and this completes the proof.
\end{proof}

\begin{theorem}
For $\phi: \gS \to \sR^D$,
imposing \Cref{eq:dsd_cst} with $d$ is equivalent to imposing \Cref{eq:dsd_cst} with $\tilde{d}$, \ie,
\begin{align}
\forall x, y \in \gS, \quad \|\phi(x)-\phi(y)\| \leq d(x, y) \iff \forall x, y \in \gS, \quad \|\phi(x)-\phi(y)\| \leq \tilde{d}(x, y).
\end{align}
\end{theorem}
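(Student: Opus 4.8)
The plan is to prove the two implications separately, leaning on \Cref{thm:lowerbound} for one direction and on the triangle inequality of the Euclidean norm for the other.

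The backward direction is immediate. If $\|\phi(x)-\phi(y)\| \leq \tilde{d}(x, y)$ holds for all $x, y \in \gS$, then by \Cref{thm:lowerbound} we have $\tilde{d}(x, y) \leq d(x, y)$, so chaining the two inequalities gives $\|\phi(x)-\phi(y)\| \leq d(x, y)$ for all $x, y$. No further work is needed here.

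The forward direction carries the actual content. I would assume $\|\phi(x)-\phi(y)\| \leq d(x, y)$ for all $x, y$ and first upgrade this to a symmetrized bound: since the Euclidean norm satisfies $\|\phi(x)-\phi(y)\| = \|\phi(y)-\phi(x)\| \leq d(y, x)$, we obtain $\|\phi(x)-\phi(y)\| \leq \min(d(x, y), d(y, x)) = d_s(x, y)$. Next, for an arbitrary finite path $p = (s_0, s_1, \dots, s_t) \in P(x, y)$ with $s_0 = x$ and $s_t = y$, I would apply the triangle inequality telescopically:
\begin{align}
\|\phi(x)-\phi(y)\| \leq \sum_{i=0}^{t-1} \|\phi(s_i)-\phi(s_{i+1})\| \leq \sum_{i=0}^{t-1} d_s(s_i, s_{i+1}) = D_s(p).
\end{align}
Since this bound holds uniformly over every path, taking the infimum over $p \in P(x, y)$ yields $\|\phi(x)-\phi(y)\| \leq \inf_{p \in P(x, y)} D_s(p) = \tilde{d}(x, y)$ when $x \neq y$, while the case $x = y$ is trivial because both sides vanish.

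The argument is elementary and I do not anticipate a genuine obstacle; the only points requiring care are the two that motivate the construction of $\tilde{d}$ in the first place. The first is invoking the symmetry of the Euclidean norm to replace $d$ by $d_s = \min(d(x, y), d(y, x))$, which is precisely why the asymmetry of the original $d$ is harmless. The second is handling the infimum cleanly: the single-edge path $(x, y)$ already certifies $\tilde{d}(x, y) \leq d_s(x, y)$, so the infimum is well-defined and non-negative, and the telescoping bound must be established for every path before passing to the infimum.
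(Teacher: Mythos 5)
Your proposal is correct and follows essentially the same route as the paper's proof: the backward implication via \Cref{thm:lowerbound}, and the forward implication by symmetrizing the edge bounds to $d_s$, telescoping with the triangle inequality along an arbitrary path, and passing to the infimum over $P(x,y)$. Nothing is missing.
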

\begin{proof}
From \Cref{thm:lowerbound}, we know that $\|\phi(x)-\phi(y)\| \leq \tilde{d}(x, y)$ implies $\|\phi(x)-\phi(y)\| \leq d(x, y)$.
Now, we assume that $\|\phi(x)-\phi(y)\| \leq d(x, y)$ holds for any $x, y \in \gS$.
First, if $x = y$, then $\|\phi(x) - \phi(y)\|$ becomes $0$ and thus $\|\phi(x)-\phi(y)\| \leq \tilde{d}(x, y)$ always holds.
For $x \neq y$,
let us consider any state path $p = (s_0=x, s_1, s_2, \dots, s_{t-1}, s_t=y) \in P(x, y)$.
For any $i \in \{0, 1, \ldots, t-1\}$, we have
\begin{align}
    \|\phi(s_i) - \phi(s_{i+1})\| &\leq d(s_i, s_{i+1}), \\
    \|\phi(s_{i+1}) - \phi(s_i)\| &\leq d(s_{i+1}, s_i),
\end{align}
and thus we get $\|\phi(s_i) - \phi(s_{i+1})\| = \|\phi(s_{i+1}) - \phi(s_i)\| \leq \min(d(s_i, s_{i+1}),  d(s_{i+1}, s_i)) = d_s(s_i, s_{i+1})$. Now, we have the following inequalities:
\begin{align}
    \|\phi(s_0) - \phi(s_1)\| &\leq d_s(s_0, s_1), \\
    \|\phi(s_1) - \phi(s_2)\| &\leq d_s(s_1, s_2), \\
    \dots, \\
    \|\phi(s_{t-1}) - \phi(s_t)\| &\leq d_s(s_{t-1}, s_t).
\end{align}
From these, we obtain $\|\phi(x) - \phi(y)\| = \|\phi(s_0) - \phi(s_t)\| \leq \sum_{i=0}^{t-1} \|\phi(s_i) - \phi(s_{i+1})\|
\leq \sum_{i=0}^{t-1} d_s(s_i, s_{i+1}) = D_s(p)$.
Then, by taking the infimum of the right-hand side over all possible $p \in P(x, y)$,
we get $\|\phi(x) - \phi(y)\| \leq \inf_{p \in P(x, y)} D_s(p) = \tilde{d}(x, y)$ and this completes the proof.
\end{proof}

\begin{theorem}
$\tilde{d}$ is a valid pseudometric, \ie,
\begin{enumerate}[label=(\alph*)]
    \item $\forall x \in \gS$, $\tilde{d}(x, x) = 0$.
    \item (Symmetry) $\forall x, y \in \gS$, $\tilde{d}(x, y) = \tilde{d}(y, x)$.
    \item (Triangle inequality) $\forall x, y, z \in \gS$, $\tilde{d}(x, y) \leq \tilde{d}(x, z) + \tilde{d}(z, y)$.
\end{enumerate}
\end{theorem}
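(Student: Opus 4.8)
The plan is to verify the three defining properties of a pseudometric in turn, exploiting that the symmetrized function $d_s(x,y) = \min(d(x,y), d(y,x))$ is symmetric by construction and that $\tilde{d}$ is built as an infimum of path-lengths $D_s$. Property (a), $\tilde{d}(x,x) = 0$, requires nothing: the definition sets $\tilde{d}(x,y) = 0$ whenever $x = y$. (Non-negativity $\tilde{d} \geq 0$ was already recorded in \Cref{thm:lowerbound}.)

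For symmetry (b), the key fact is that $d_s$ is symmetric, since $\min(d(x,y),d(y,x)) = \min(d(y,x),d(x,y))$. I would dispatch $x=y$ trivially (both sides $0$), and for $x \neq y$ set up a length-preserving bijection between $P(x,y)$ and $P(y,x)$ by path reversal: reversing $p = (s_0 = x, \dots, s_t = y)$ yields $p' = (s_t = y, \dots, s_0 = x) \in P(y,x)$, and $D_s(p') = \sum_{i} d_s(s_{i+1}, s_i) = \sum_{i} d_s(s_i, s_{i+1}) = D_s(p)$ by symmetry of $d_s$. Since reversal is an involution between the two path sets that preserves $D_s$, the two infima coincide, giving $\tilde{d}(x,y) = \tilde{d}(y,x)$.

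For the triangle inequality (c), the core idea is that concatenating a path from $x$ to $z$ with a path from $z$ to $y$ produces a path from $x$ to $y$ whose $D_s$-length is exactly the sum of the two lengths. I would first clear the degenerate cases forced by the diagonal case-split in the definition of $\tilde{d}$: if $x = y$ the inequality reduces to $0 \leq \tilde{d}(x,z) + \tilde{d}(z,x)$, which holds by non-negativity; if $x = z$ or $z = y$, one right-hand term vanishes and the inequality becomes an equality. In the remaining case with $x, y, z$ distinct, I take arbitrary $p_1 \in P(x,z)$ and $p_2 \in P(z,y)$, form the concatenation $p_1 \cdot p_2 \in P(x,y)$ (well-defined since the endpoint of $p_1$ equals the start of $p_2$), and note $D_s(p_1 \cdot p_2) = D_s(p_1) + D_s(p_2)$. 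Hence $\tilde{d}(x,y) \leq D_s(p_1) + D_s(p_2)$, and taking the infimum over $p_1$ and $p_2$ independently yields $\tilde{d}(x,y) \leq \tilde{d}(x,z) + \tilde{d}(z,y)$.

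The only genuinely delicate point is the infimum in (c): since $\tilde{d}$ is an infimum that need not be attained, I would phrase the concluding step through an $\varepsilon$-argument — for any $\varepsilon > 0$ choose $p_1, p_2$ with $D_s(p_i)$ within $\varepsilon/2$ of the respective infima — rather than assuming minimizing paths exist. Everything else follows routinely from the symmetry of $d_s$ and the additivity of $D_s$ under concatenation; the diagonal case-split in the definition is the only feature that demands careful bookkeeping.
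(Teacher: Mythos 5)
Your proof is correct and follows essentially the same route as the paper's: (a) holds by definition, (b) by the symmetry of $d_s$ applied termwise along reversed paths, and (c) by concatenating paths through $z$ and using that the infimum of a sum over independently ranging paths splits into a sum of infima. Your explicit path-reversal bijection and the $\varepsilon$-argument for the infimum step are just slightly more careful renderings of steps the paper treats as immediate.
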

\begin{proof}
(\emph{a}) By definition, $\tilde{d}(x, x) = 0$ always holds for all $x \in \gS$.

(\emph{b}) If $x = y$, then $\tilde{d}(x, y) = \tilde{d}(y, x) = 0$.
Otherwise, with $p = (s_0=x, s_1, s_2, \dots, s_{t-1}, s_t=y) \in P(x, y)$, we can prove the symmetry of $\tilde{d}$ as follows:
\begin{align}
    \tilde{d}(x, y) &= \inf_{p \in P(x, y)} D_s(p) \\
    &= \inf_{p \in P(x, y)} \sum_{i=0}^{t-1} d_s(s_i, s_{i+1}) \\
    &= \inf_{p \in P(x, y)} \sum_{i=0}^{t-1} d_s(s_{i+1}, s_i) \\
    &= \inf_{p \in P(y, x)} D_s(p) \\
    &= \tilde{d}(y, x).
\end{align}

(\emph{c}) If $x = y$, $y = z$, or $z = x$, then it can be easily seen that
$\tilde{d}(x, y) \leq \tilde{d}(x, z) + \tilde{d}(z, y)$ always holds.
Hence, we assume that they are mutually different from each other.
Then, the following inequality holds:
\begin{align}
    \tilde{d}(x, y) &= \inf_{p \in P(x, y)} D_s(p) \\
    &\leq \inf_{p_1 \in P(x, z), p_2 \in P(z, y)} D_s(p_1) + D_s(p_2) \\
    &= \inf_{p_1 \in P(x, z)} D_s(p_1) + \inf_{p_2 \in P(z, y)} D_s(p_2) \\
    &= \tilde{d}(x, z) + \tilde{d}(z, y),
\end{align}
which completes the proof.
\end{proof}

\subsection{Implications of \Cref{thm:dsd}}
\label{sec:appx_implication}

\Cref{thm:dsd} suggests that
the constraint in \Cref{eq:dsd_cst} implicitly transforms an arbitrary distance function $d$ into a tighter valid pseudometric $\tilde{d}$.
Intuitively, this $\tilde{d}(x, y)$ corresponds to the minimum possible (symmetrized) path distance from $x$ to $y$.
Hence, if we train DSD with \Cref{eq:dsd_cst},
it will find long-distance transitions that cannot be equivalently achieved
by taking multiple short-distance transitions.
Intuitively, in the context of CSD (\Cref{sec:learned_distance}), 
this implies that the agent will find rare state transitions that cannot be bypassed by taking `easy' intermediate steps,
which is a desirable property.

However, there are some limitations regarding the use of our distance function $d^{\text{CSD}}$ (\Cref{eq:csd_dist}).
First, while the DSD constraint in \Cref{eq:dsd_cst} implicitly symmetrizes
the distance function by taking the minimum between $d(x, y)$ and $d(y, x)$,
this may not be ideal in highly asymmetric environments involving many irreversible transitions.
In practice, this may be resolved by only imposing one-sided constraints of our interest.
Second, in our implementation, we only consider a single-step transition $(s, s')$ and a single-step density model $q_\theta(s'|s)$
as we found this simple design choice to be sufficient for our experiments.
However, in order to fully leverage the aforementioned property of the induced pseudometric,
the constraint may be imposed on any state pairs with a multi-step density model,
which we leave for future work.

\section{Comparison with Unsupervised Disagreement-Based Exploration}
\label{sec:appx_disag}
\begin{figure*}[t!]
    \centering
    \includegraphics[width=\linewidth]{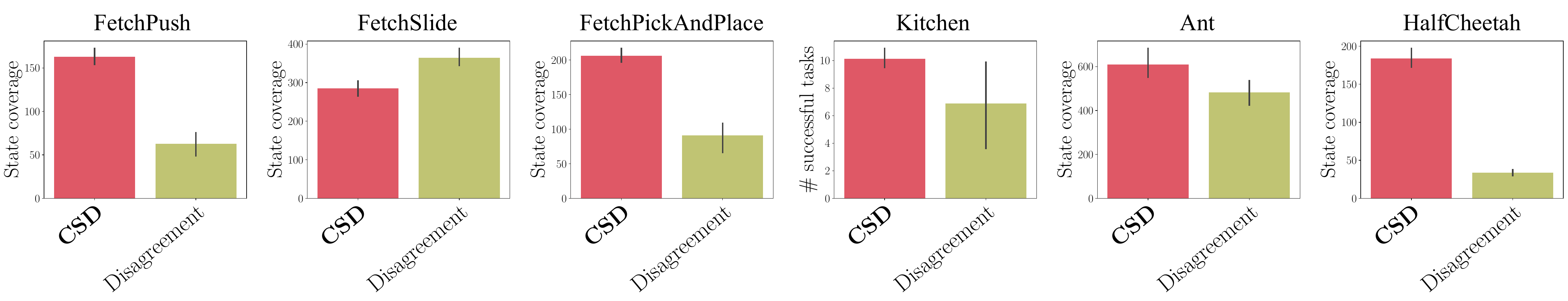}
    \caption{
    Comparison of unsupervised state coverage metrics between CSD and ensemble disagreement-based exploration \citep{disag_pathak2019}
    in all six environments.
    CSD mostly outperforms disagreement-based exploration in our state coverage metrics
    mainly because it actively diversifies hard-to-control states such as the object position or the agent location.
    }
    \label{fig:csd_disag}
\end{figure*}
\begin{figure*}[t!]
    \centering
    \includegraphics[width=0.9\linewidth]{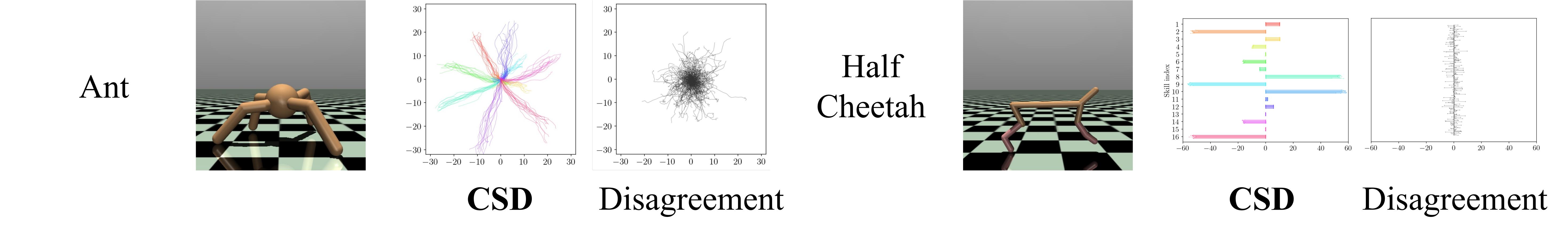}
    \caption{
    The agent's $xy$ (Ant) or $x$ (HalfCheetah) trajectories of CSD and disagreement-based exploration.
    While CSD seeks very consistent, directed behaviors, %
    disagreement-based exploration only focuses on diversifying states with chaotic, random behaviors.
    We provide videos illustrating this difference on \csdlink.
    }
    \label{fig:qual_csd_disag}
\end{figure*}

In this section,
we discuss the difference between CSD and unsupervised goal-conditioned RL
and present an empirical comparison between them.
Unsupervised goal-conditioned RL approaches,
such as DISCERN \citep{discern_wardefarley2019}, Skew-Fit \citep{skewfit_pong2020},
MEGA \citep{mega_pitis2020}, and LEXA \citep{lexa_mendonca2021},
learn diverse behaviors typically by
(1) running an exploration method that collects diverse `goal' states $g$
and (2) learning a goal-conditioned policy $\pi(a|s, g)$ to reach the states discovered.
Hence, treating $g$ as a $|\gS|$-dimensional skill latent vector,
these approaches may be viewed as a special type of unsupervised skill discovery.

However, the main focuses of unsupervised skill discovery are different from
that of unsupervised goal-conditioned RL.
First, unsupervised skill discovery aims to discover more general skills not restricted to goal-reaching behaviors,
which tend to be \emph{static} as the agent is encouraged to stay still at the goal state
\citep{lexa_mendonca2021,rest_jiang2022}.
For instance, our approach maximizes traveled distances,
which leads to more `dynamic' behaviors like consistently running in a specific direction (\Cref{fig:qual_mujoco}).
Second, unsupervised skill discovery aims to build a \emph{compact} set of skills, which could also be discrete,
rather than finding all the possible states in the given environment.
For example, if we train CSD with three discrete skills,
these behaviors will be as `distant' as possible from one another,
being maximally distinguishable.
As such, we can have useful behaviors with a much low-dimensional skill space,
making it more amenable to hierarchical RL.

Despite the difference in goals,
to better illustrate the difference between them,
we make an empirical comparison between CSD and ensemble disagreement-based exploration \citep{disag_pathak2019},
which some previous unsupervised goal-conditioned RL methods like LEXA \citep{lexa_mendonca2021} use as the exploration method.
Disagreement-based exploration learns an ensemble of $E$ forward dynamics models $\{\hat{p}_i(s'|s, a)\}_{i \in \{1, 2, \dots, E\}}$,
and uses its variance $\sum_{k}^{|\gS|} \sV[\hat{p}_i(\cdot_k|s, a)]$ as an intrinsic reward,
in order to seek unexplored transitions with high epistemic uncertainty.
While unsupervised goal-condition RL approaches additionally learn a goal-conditioned policy, we do not separately learn it
since the state coverage metrics of the exploration policy
can serve as an approximate upper bound of the corresponding optimal goal-conditioned policy's performance.

\Cref{fig:csd_disag} presents the comparisons of unsupervised state coverage metrics between CSD and disagreement-based exploration
in all of our six environments.
The results suggest that CSD mostly outperforms disagreement-based exploration in our state coverage metrics,
mainly because CSD actively diversifies hard-to-control states such as the object position or the agent location,
while the pure exploration method only focuses on finding unseen transitions.
This difference is especially prominent in Ant and HalfCheetah (\Cref{fig:qual_csd_disag}),
in which CSD seeks very consistent, directed behaviors, such as moving in one direction,
while disagreement-based exploration only focuses on diversifying states with chaotic, random behaviors.
We provide videos illustrating this difference at \csdaddress.

\section{Additional Results}
\label{sec:appx_add_results}

\begin{figure*}[t!]
    \centering
    \includegraphics[width=\linewidth]{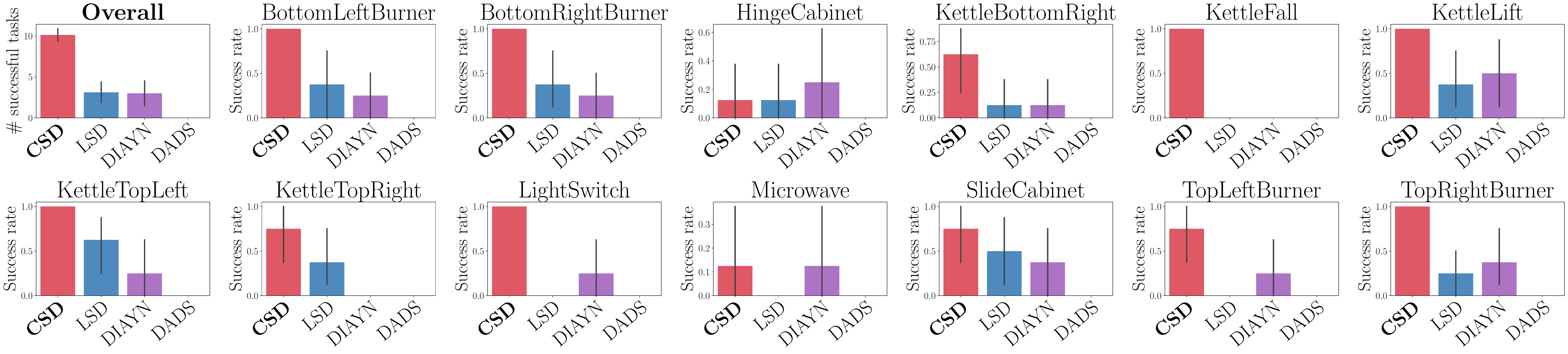}
    \caption{
    Task success rates of 2-D continuous skills discovered by four methods in the Kitchen environment.
    }
    \label{fig:kitchen_coin_2}
\end{figure*}
\begin{figure*}[t!]
    \centering
    \includegraphics[width=0.7\linewidth]{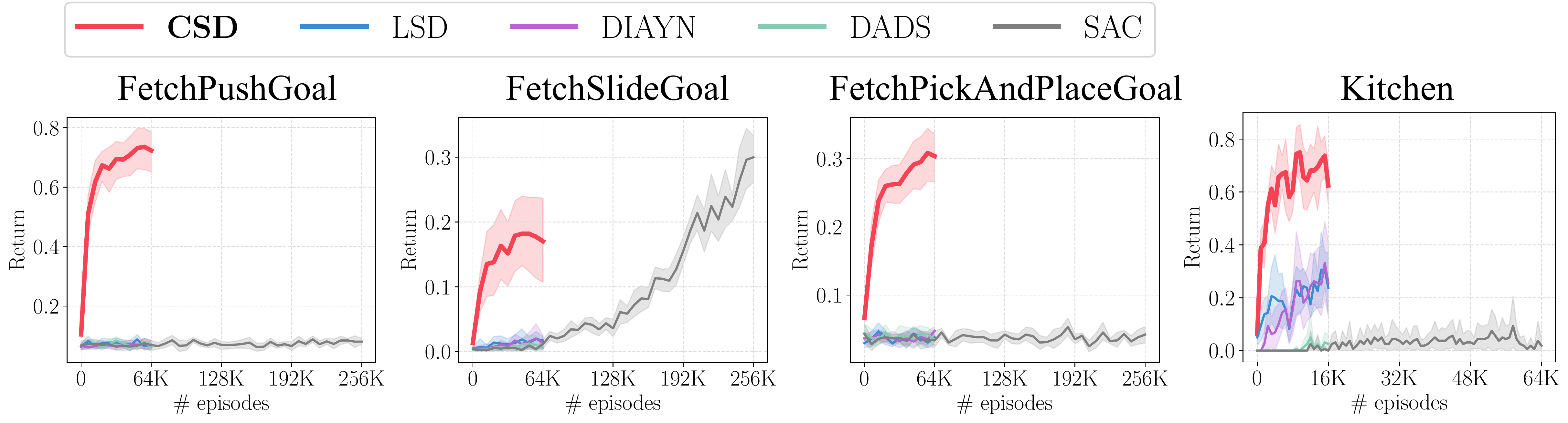}
    \caption{
    Extended learning curves of the SAC baseline in Fetch and Kitchen downstream tasks.
    }
    \label{fig:down_long}
\end{figure*}

\paragraph{Additional quantitative results.}
\Cref{fig:kitchen_coin_2} shows the task success rates of the 2-D continuous skills
learned by CSD, LSD, DIAYN, and DADS.
As in the discrete case, CSD outperforms the other methods by a significant margin.
\Cref{fig:down_long} demonstrates extended learning curves in Fetch and Kitchen downstream tasks,
where we train SAC for four times as long as skill discovery methods.
The results suggest that, while SAC alone can solve the FetchSlideGoal task with a lot more samples,
it fails at learning FetchPushGoal, FetchPickAndPlaceGoal, and Kitchen mainly because they are challenging sparse-reward tasks.
In contrast, agents can quickly learn all these tasks with temporally extended skills from CSD.

\paragraph{Additional qualitative results.}
\Cref{fig:qual_fetch_all,fig:qual_mujoco_all} illustrate
the skill trajectories of all runs we use for our experiments in Fetch manipulation and two MuJoCo locomotion environments
(eight random seeds for each method in each environment).
In the Fetch environments, CSD is the only method that learns object manipulation skills without supervision (\Cref{fig:qual_fetch_all}).
In Ant and HalfCheetah, CSD not only learns locomotion skills but also discovers a variety of diverse skills,
such as rotating and flipping in both environments (\Cref{fig:qual_mujoco_all}, \csdvideo).
We provide the complete qualitative results in Humanoid in \Cref{fig:qual_hum_all}.
\Cref{fig:qual_fetch_oracle_all} shows the full results of CSD and LSD equipped with the oracle prior
in FetchPickAndPlace (eight seeds each).
While CSD always learns to pick up the object, LSD discovers such skills in only three out of eight runs
(\Cref{fig:qual_fetch_oracle_all}).
This is because our controllability-aware distance function
consistently encourages the agent to learn more challenging picking-up behaviors.
As a result, CSD significantly outperforms LSD in downstream tasks (\Cref{fig:fetch_down_oracle}).

\begin{figure*}[t!]
    \centering
    \begin{subfigure}[ht]{1.0\textwidth}
        \centering
        \includegraphics[width=\linewidth]{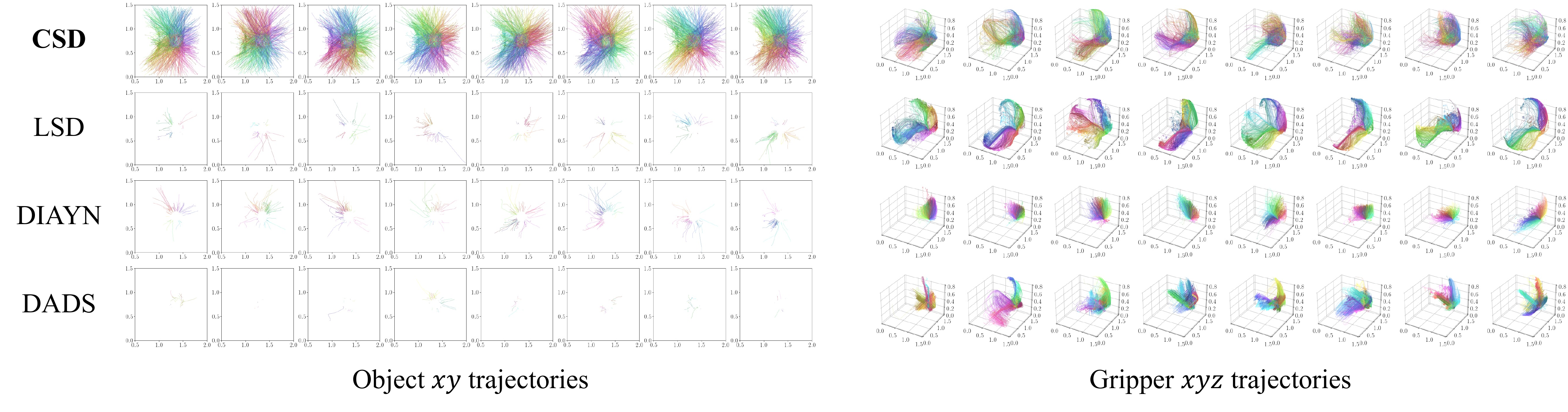}
        \caption{FetchPush}
        \vspace{15pt}
    \end{subfigure}
    \begin{subfigure}[ht]{1.0\textwidth}
        \centering
        \includegraphics[width=\linewidth]{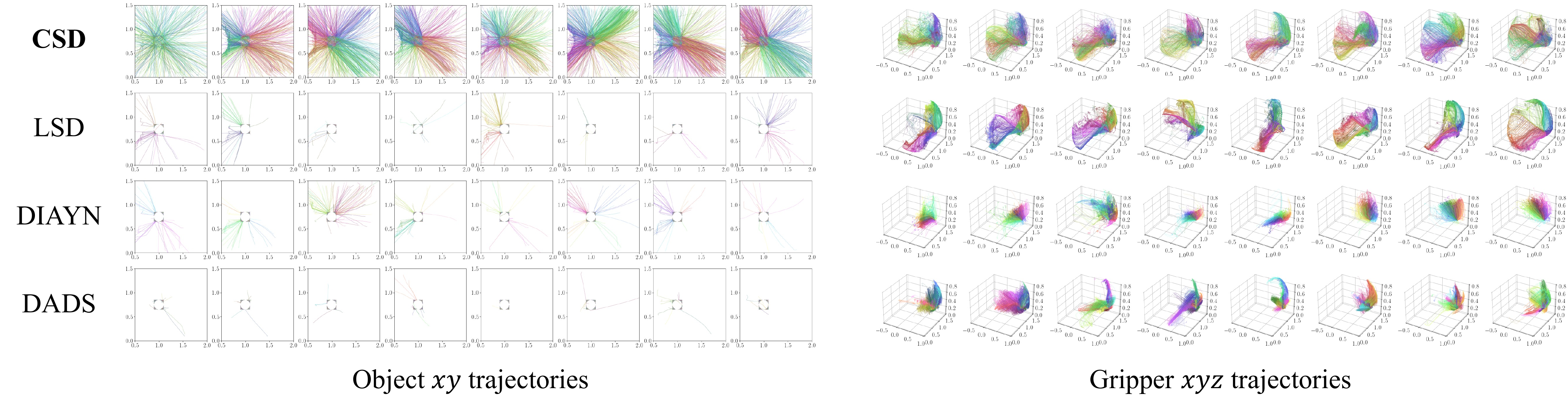}
        \caption{FetchSlide}
        \vspace{15pt}
    \end{subfigure}
    \begin{subfigure}[ht]{1.0\textwidth}
        \centering
        \includegraphics[width=\linewidth]{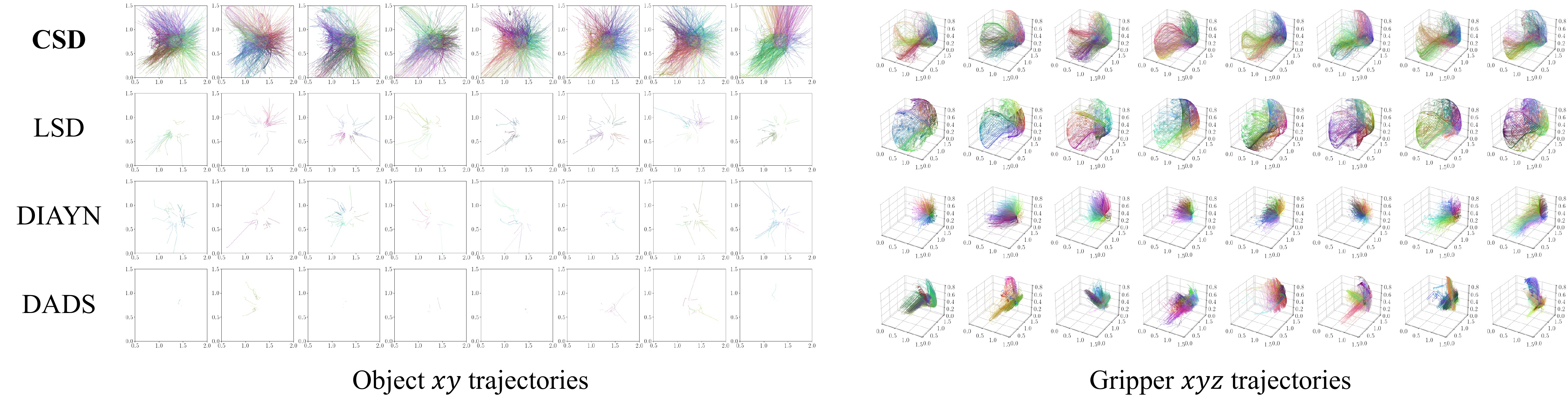}
        \caption{FetchPickAndPlace}
    \end{subfigure}
    \caption{
    Complete qualitative results in three Fetch environments (eight runs for each method in each environment).
    We plot the skill trajectories of the object and the gripper with different colors.
    CSD is the only unsupervised skill discovery method that discovers object manipulation skills without supervision.
    }
    \label{fig:qual_fetch_all}
\end{figure*}
\begin{figure*}[t!]
    \centering
    \includegraphics[width=\linewidth]{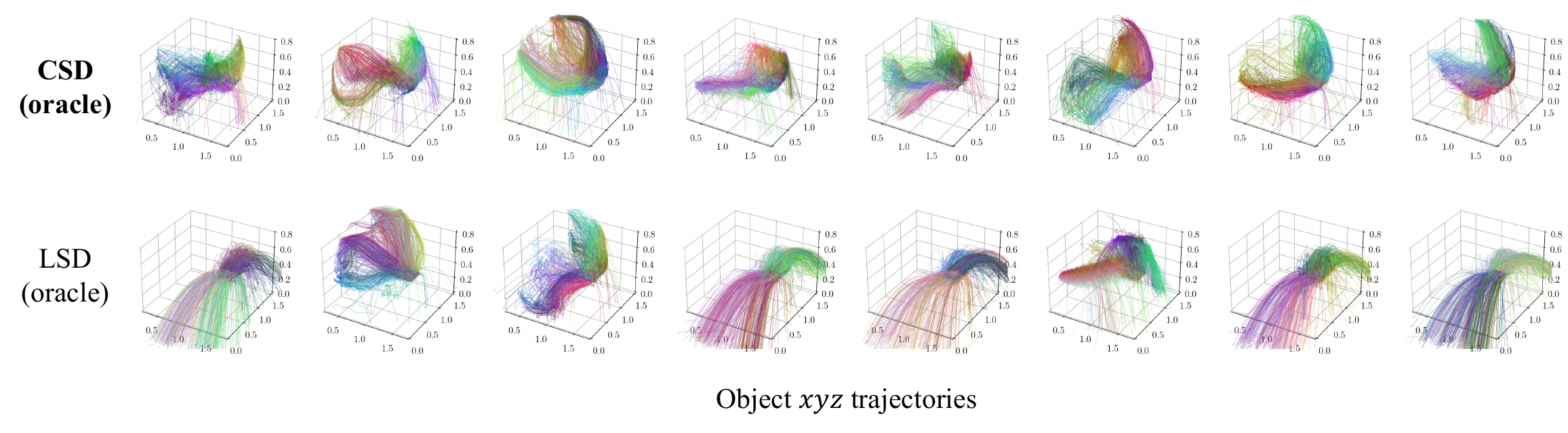}
    \caption{
    Complete qualitative results of CSD and LSD trained with the oracle prior in FetchPickAndPlace (eight runs for each method).
    We plot the skill trajectories of the object and the gripper with different colors.
    Note that while LSD mostly just throws the object away, CSD always learns to pick up the object in all eight runs.
    }
    \label{fig:qual_fetch_oracle_all}
\end{figure*}
\begin{figure*}[t!]
    \centering
    \includegraphics[width=0.6\linewidth]{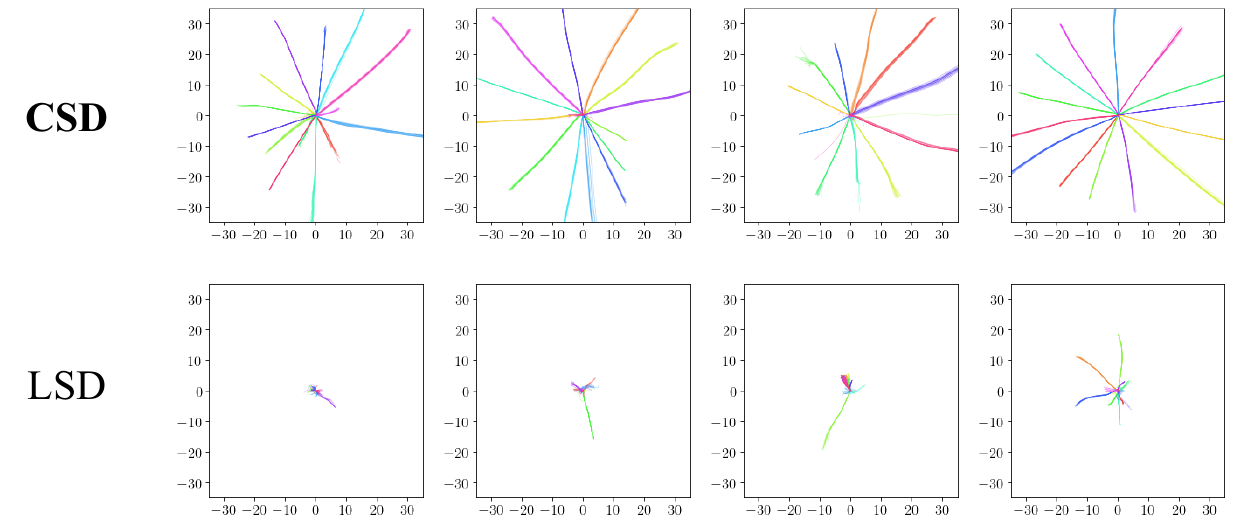}
    \caption{
    Complete qualitative results in Humanoid (four runs for each method in each environment).
    We plot the skill $xy$ trajectories of the agent with different colors.
    We note that we train CSD and LSD for $100$K episodes (which is a tenth of the number of episodes used in the LSD paper \citep{lsd_park2022}).
    }
    \label{fig:qual_hum_all}
\end{figure*}
\begin{figure*}[t!]
    \centering
    \begin{subfigure}[ht]{1.0\textwidth}
        \centering
        \includegraphics[width=\linewidth]{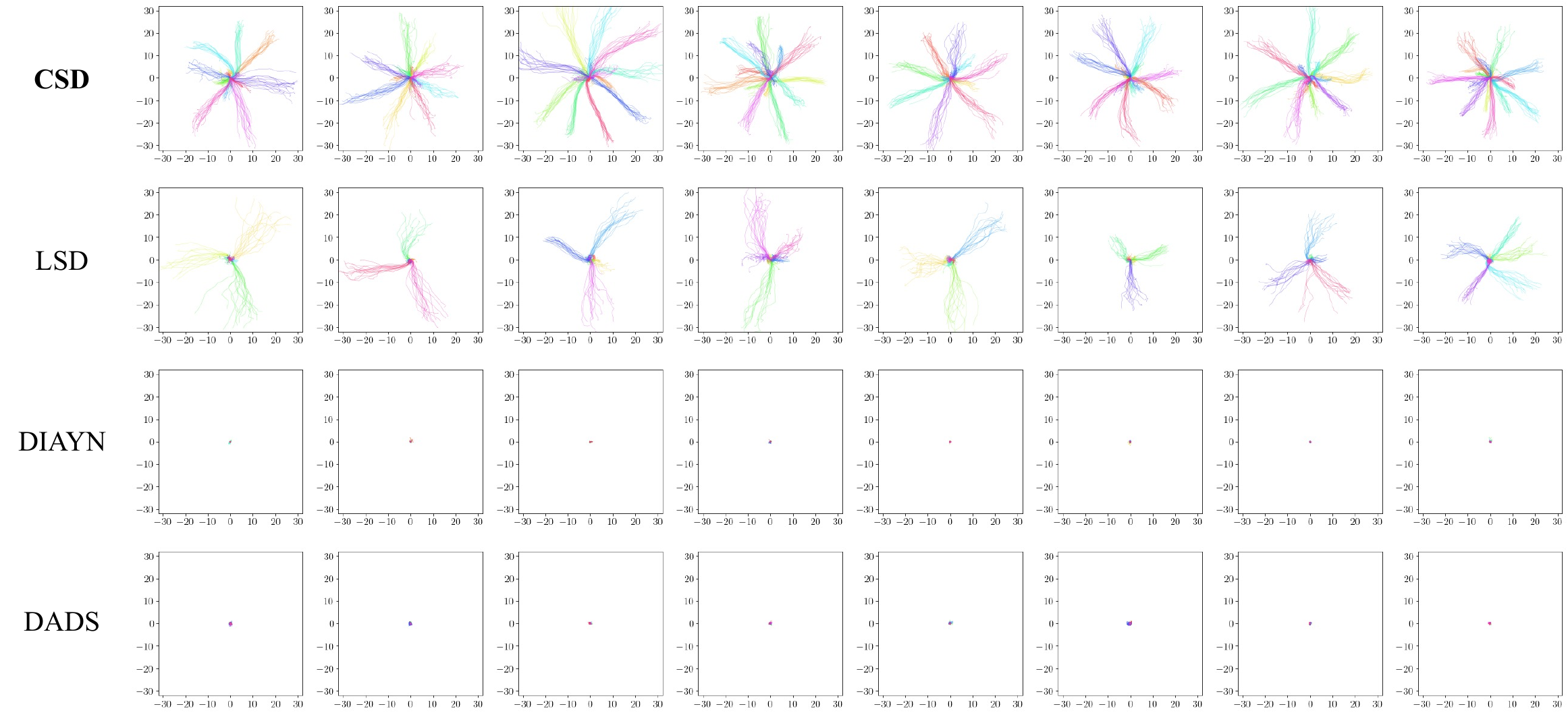}
        \caption{Ant $xy$ trajectories}
        \vspace{15pt}
    \end{subfigure}
    \begin{subfigure}[ht]{1.0\textwidth}
        \centering
        \includegraphics[width=\linewidth]{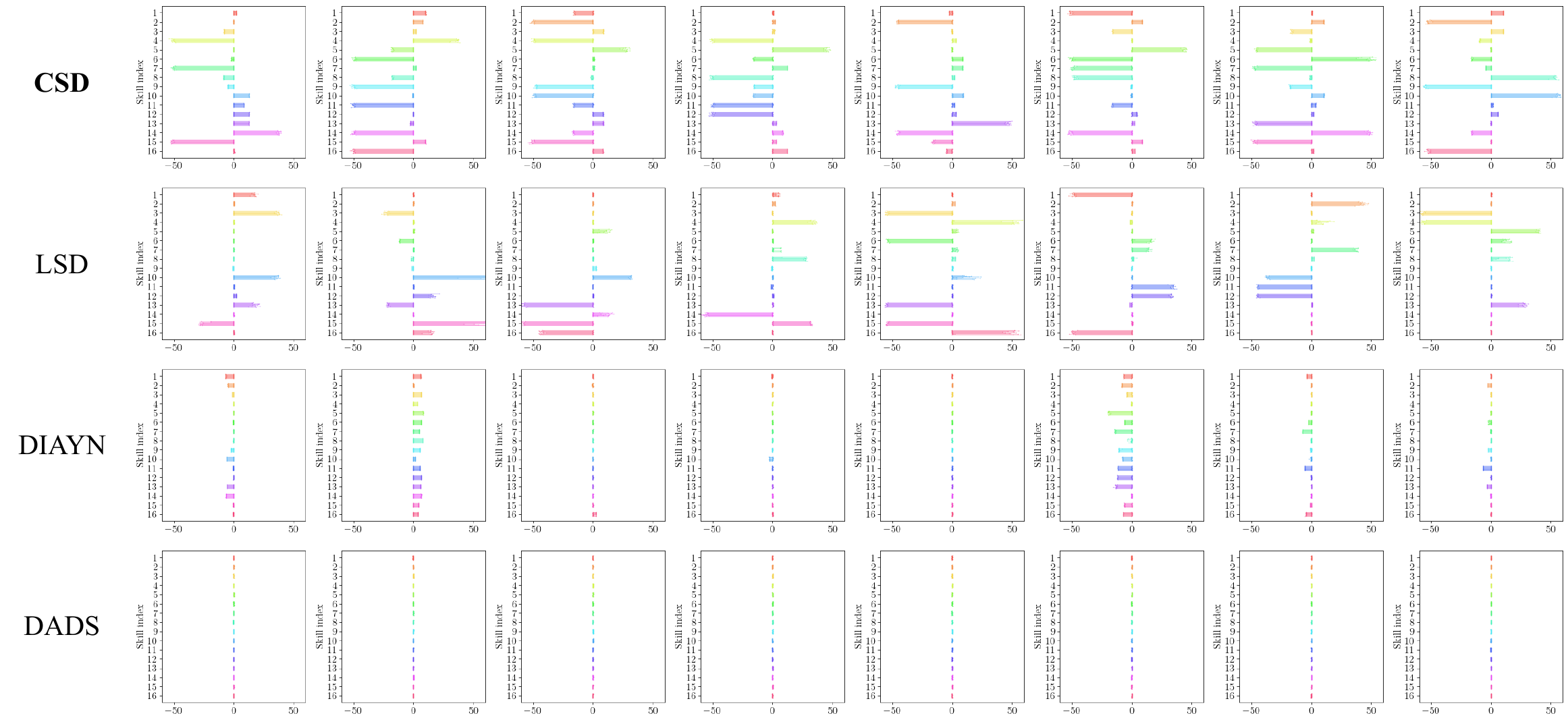}
        \caption{HalfCheetah $x$ trajectories}
    \end{subfigure}
    \caption{
    Complete qualitative results in Ant and HalfCheetah (eight runs for each method in each environment).
    We plot the skill trajectories of the agent with different colors.
    We note that in both environments, CSD not only learns locomotion skills but also discovers a variety of diverse skills,
    such as rotating and flipping.
    }
    \label{fig:qual_mujoco_all}
\end{figure*}

\clearpage

\section{Implementation Details}
\label{sec:appx_impl_details}

For manipulation environments,
we implement CSD on top of the publicly available codebase of MUSIC \citep{music_zhao2021}.
For MuJoCo environments, we implement CSD based on the publicly available codebase of LSD \cite{lsd_park2022}.
We mostly follow the hyperparameters used in the original implementations.
Our implementation can be found in the following repositories: {\url{https://github.com/seohongpark/CSD-manipulation}} (manipulation environments)
and {\url{https://github.com/seohongpark/CSD-locomotion}} (locomotion environments).
We run our experiments on an internal cluster with NVIDIA Tesla V100 and NVIDIA GeForce RTX 2080 Ti GPUs.
Each run mostly takes a day or less.

\subsection{Environments}
We adopt the same environment settings used in LSD \citep{lsd_park2022}
for Fetch manipulation environments (FetchPush, FetchSlide, FetchPickAndPlace) \citep{fetch_plappert2018}
and MuJoCo locomotion environments (Ant, HalfCheetah) \citep{mujoco_todorov2012,openaigym_brockman2016}.
In Fetch environments,
unlike LSD, we do not use any supervision, such as limiting the discriminator's input only to the object.
For the Kitchen environment, we use a $7$-DoF end-effector controller \citep{lexa_mendonca2021} with state-based observations.
We use an episode length of $200$ for locomotion environments and an episode length of $50$ for manipulation environments.
In locomotion environments, to ensure fair comparisons,
we use preset normalizers for all skill discovery methods as done in \citet{lsd_park2022},
but we find that CSD can still discover diverse behaviors including locomotion skills without a normalizer.

\subsection{Downstream Tasks}
\label{sec:appx_down}
\paragraph{Fetch environments.}
We use the same downstream tasks in \citet{lsd_park2022} for Fetch environments.
In FetchPushGoal, FetchSlideGoal, and FetchPickAndPlaceGoal, a goal position is randomly sampled at the beginning of each episode.
If the agent successfully places the object to the target position, a reward of $1$ is given to the agent and the episode ends.
We follow the original goal sampling range and reach criterion from \citet{fetch_plappert2018}.

\paragraph{Kitchen environment.}
We consider the following $13$ downstream tasks for the Kitchen environment:
BottomLeftBurner, BottomRightBurner, HingeCabinet, KettleBottomRight, KettleFall, KettleLift,
KettleTopLeft, KettleTopRight, LightSwitch, Microwave, SlideCabinet, TopLeftBurner, TopRightBurner.
For the success criteria of the tasks, we mostly follow \citet{kitchen_gupta2019,lexa_mendonca2021}
and refer to our implementation for detailed definitions.
As in the Fetch tasks, the agent gets a reward of $1$ when it satisfies the success criterion of each task.

\paragraph{MuJoCo locomotion environments.}
In AntGoal, a goal's $xy$ position is randomly sampled from $\text{Unif}([-20, 20]^2)$,
and if the agent reaches the goal, it gets a reward of $10$ and the episode ends.
In AntMultiGoals, the agent should follow four goals within $50$ steps each,
where goal positions are randomly sampled from $\text{Unif}([-7.5, 7.5]^2)$ centered at the current coordinates.
The agent gets a reward of $2.5$ every time it reaches a goal.
In HalfCheetahGoal, a goal's $x$ coordinate is randomly sampled from $\text{Unif}([-60, 60])$,
and if the agent reaches the goal, it gets a reward of $10$ and the episode ends.
For these three environments, we consider the agent to have reached the goal if it enters within a radius of $3$ from the goal.
In HalfCheetahHurdle, the agent gets a reward of $1$ if it jumps over a hurdle,
where we use the same hurdle positions from \citet{hurdle_qureshi2020}.

\subsection{Training.}

\paragraph{Skill policy.}
At the beginning of each episode,
we sample a skill $z$ from either a standard Gaussian distribution (for continuous skills)
or a uniform distribution (for discrete skills), and fix the skill throughout the episode.
For discrete skills,
we use standard one-hot vectors for DIAYN and DADS, and zero-centered one-hot vectors for CSD and LSD, following \citet{lsd_park2022}.
For DADS, we follow the original implementation choices,
such as the use of batch normalization and fixing the output variance of the skill dynamics model.
For CSD in manipulation environments, we start training the skill policy from epoch $4000$,
after the initial conditional density model has stabilized.
When modeling $\Sigma_\theta(s)$ of the conditional density model,
we use a diagonal covariance matrix
as we found it to be practically sufficient for our experiments.
Also, we normalize the diagonal elements with their geometric mean at each state for further stability.

We present the full list of the hyperparameters used in our experiments in \Cref{table:hyp_loc,table:hyp_man},
where we indicate the values considered for our hyperparameter search with curly brackets.
For the intrinsic reward coefficient, we use $50$ (DADS), $500$ (CSD and LSD), $1500$ (DIAYN),
$200$ (Disagreement Fetch), or $50$ (Disagreement Kitchen).
For the learning rate, we use $0.0001$ for all experiments except for CSD Humanoid, for which we find $0.0003$ to work better
(we also test $0.0003$ for LSD Humanoid for a fair comparison, but we find the default value of $0.0001$ to work better for LSD).
For the reward scale, we use $1$ (LSD, DIAYN, and DADS) or $10$ (CSD).
For the SAC $\alpha$, we use $0.003$ (LSD Ant and LSD HalfCheetah), $0.03$ (CSD Ant and LSD Humanoid), $0.1$ (CSD HalfCheetah), $0.3$ (CSD Humanoid), or auto-adjust (DIAYN and DADS).

\paragraph{High-level controller.}
After unsupervised skill discovery,
we train a high-level controller $\pi^h(z|s, g)$ that selects skills in a sequential manner for solving downstream tasks.
We use SAC \citep{sac_haarnoja2018} for continuous skills and PPO \citep{ppo_schulman2017} for discrete skills.
The high-level policy selects a new skill every $R$ steps.
We mostly follow the hyperparameters for low-level skill policies
and present the specific hyperparameters used for high-level controllers in
\Cref{table:hyp_man_high,table:hyp_loc_high}.

\begin{table}[t]
    \caption{Hyperparameters for manipulation environments.}
    \label{table:hyp_man}
    \vskip 0.15in
    \begin{center}
    \begin{tabular}{lc}
        \toprule
        Hyperparameter & Value \\
        \midrule
        Optimizer & Adam \citep{adam_kingma2014} \\
        Learning rate & $10^{-3}$ \\
        \# training epochs & $40000$ (Fetch), $20000$ (Kitchen) \\
        \# episodes per epoch & $2$ \\
        \# gradient steps per episode & $10$ \\
        Episode length & $50$ \\
        Minibatch size & $256$ \\
        Discount factor $\gamma$ & $0.98$ \\
        Replay buffer size & $10^5$ \\
        \# hidden layers & $2$ \\
        \# hidden units per layer & $256$ \\
        Nonlinearity & ReLU \\
        Target network smoothing coefficient $\tau$ & $0.995$ \\
        Random action probability & $0.3$ \\
        Action noise scale & $0.2$ \\
        Entropy coefficient & $0.02$ \\
        Intrinsic reward coefficient & $\{5, 15, 50, 150, 500, 1500, 5000\}$ \\
        CSD $\epsilon$ & $10^{-6}$ \\
        CSD initial $\lambda$ & $3000$ \\
        Disagreement ensemble size & $5$ \\
        \bottomrule
    \end{tabular}
    \end{center}
    \vskip -0.1in
\end{table}

\begin{table}[t]
    \caption{Hyperparameters for locomotion environments.}
    \label{table:hyp_loc}
    \vskip 0.15in
    \begin{center}
    \begin{tabular}{lc}
        \toprule
        Hyperparameter & Value \\
        \midrule
        Optimizer & Adam \citep{adam_kingma2014} \\
        Learning rate & $\{0.0001, 0.0003\}$ \\
        \# training epochs & $20000$ \\
        \# episodes per epoch & $5$ (Humanoid), $10$ (others) \\
        \# gradient steps per epoch & $64$ (policy), $32$ (others) \\
        Episode length & $200$ \\
        Minibatch size & $1024$ \\
        Discount factor $\gamma$ & $0.99$ \\
        Replay buffer size & $1000000$ (Humanoid), $2000$ (others) \\
        \# hidden layers & $2$ \\
        \# hidden units per layer & $1024$ (Humanoid), $512$ (others) \\
        Nonlinearity & ReLU \\
        Target network smoothing coefficient $\tau$ & $0.995$ \\
        Target network update frequency & every gradient step\tablefootnote{
        The original LSD implementation updates the target network every epoch, not every gradient step,
        but we find the latter to be about $10 \times$ sample efficient
        in terms of the number of environment steps.
        } \\
        SAC $\alpha$ & $\{0.001, 0.003, 0.01, 0.03, 0.1, 0.3, \text{auto-adjust \citep{saces_haarnoja2018}}\}$ \\
        Reward scale & $\{1, 10\}$ \\
        CSD $\epsilon$ & $10^{-6}$ \\
        CSD initial $\lambda$ & $3000$ \\
        Disagreement ensemble size & $5$ \\
        \bottomrule
    \end{tabular}
    \end{center}
    \vskip -0.1in
\end{table}

\begin{table}[t]
    \caption{Hyperparameters for SAC downstream policies in manipulation environments.}
    \label{table:hyp_man_high}
    \vskip 0.15in
    \begin{center}
    \begin{tabular}{lc}
        \toprule
        Hyperparameter & Value \\
        \midrule
        \# training epochs & $4000$ (Fetch), $8000$ (Kitchen) \\
        \# episodes per epoch & $16$ (Fetch), $2$ (Kitchen) \\
        \# gradient steps per epoch & $4$ (Fetch), $10$ (Kitchen) \\
        Replay buffer size & $10^6$ \\
        Skill sample frequency $R$ & $10$ \\
        Skill range & $[-1.5, 1.5]^D$ \\
        \bottomrule
    \end{tabular}
    \end{center}
    \vskip -0.1in
\end{table}

\begin{table}[t]
    \caption{Hyperparameters for PPO downstream policies in locomotion environments.}
    \label{table:hyp_loc_high}
    \vskip 0.15in
    \begin{center}
    \begin{tabular}{lc}
        \toprule
        Hyperparameter & Value \\
        \midrule
        Learning rate & $3 \times 10^{-4}$ \\
        \# training epochs & $1000$ \\
        \# episodes per epoch & $64$ \\
        \# gradient steps per episode & $10$ \\
        Minibatch size & $256$ \\
        Entropy coefficient & $0.01$ \\
        Skill sample frequency $R$ & $25$ \\
        \bottomrule
    \end{tabular}
    \end{center}
    \vskip -0.1in
\end{table}

\end{document}

